\tikzset{%
  not allowed/.style={%
    densely dotted,
    thick,
    color=red
  }
}
\tikzset{%
  required directed/.style={%
    shorten >=1pt,
    >=Stealth,
    ->
  }
}
\tikzset{%
  required undirected/.style={%
  }
}
\definecolor{mygrey}{rgb}{0.8,0.8,0.8}
\newcommand*\underdot[1]{\underaccent{\dot}{#1}}
\newcommand{\ptb}{PTB\xspace}
\newcommand{\myeg}{e.g.\@\xspace}
\newcommand{\myie}{i.e.\@\xspace}
\newcommand{\myvs}{vs.\@\xspace}
\newcommand{\oneEC}{1-EC\@\xspace}
\newcommand{\problemStruct}{Locked-Chain\@\xspace}
\newcommand{\pencilPoint}{$\mathcal{P}t$}
\newcommand{\pencilPointMath}{\mathcal{P}t}
\newcommand{\vertexCovMath}{covered}
\newcommand{\vertexVisMath}{visible}
\newcommand{\vertexVis}{visible\@\xspace}
\newcommand{\tightparagraph}[1]{\textbf{#1:}}
\renewenvironment{proof}[1][\proofname]{\par
  \vspace{-0.5\topsep}
  \pushQED{\qed}%
  \normalfont
  \topsep0pt \partopsep0pt 
  \trivlist
  \item[\hskip\labelsep
        \itshape
    #1\@addpunct{.}]\ignorespaces
}{%
  \popQED\endtrivlist\@endpefalse
  \addvspace{3pt plus 6pt} 
}
\newmdenv[bottomline=false,rightline=false]{lefttopbot}
\newcolumntype{x}[1]{%
>{\raggedleft\hspace{0pt}}p{#1}}%
\newtheorem{lemma}{Lemma}
\newtheorem{theorem}{Theorem}
\newtheorem{definition}{Definition}
\newcommand{\captionfonts}{\small}
\long\def\@makecaption#1#2{%
  \vskip\abovecaptionskip
  \sbox\@tempboxa{{\captionfonts #1: #2}}%
  \ifdim \wd\@tempboxa >\hsize
    {\captionfonts #1: #2\par}
  \else
    \hbox to\hsize{\hfil\box\@tempboxa\hfil}%
  \fi
  \vskip\belowcaptionskip}
\title{Parsing with Traces: An $\boldsymbol{O(n^4)}$ Algorithm and a Structural Representation}
\author{
  Jonathan K. Kummerfeld \and Dan Klein \\
  Computer Science Division \\
  University of California, Berkeley \\
  Berkeley, CA 94720, USA \\
  {\tt \{jkk,klein\}@cs.berkeley.edu}
}
\date{}
\begin{document}

\maketitle

\citetrackerfalse\pagetrackerfalse\backtrackerfalse

%
%

\begin{abstract}
  General treebank analyses are graph structured, but parsers are typically restricted to tree structures for efficiency and modeling reasons.
  We propose a new representation and algorithm for a class of graph structures that is flexible enough to cover almost all treebank structures, while still admitting efficient learning and inference.
  In particular, we consider directed, acyclic, one-endpoint-crossing graph structures, which cover most long-distance dislocation, shared argumentation, and similar tree-violating linguistic phenomena.
  We describe how to convert phrase structure parses, including traces, to our new representation in a reversible manner.
  Our dynamic program uniquely decomposes structures, is sound and complete, and covers $97.3\%$ of the Penn English Treebank.
  We also implement a proof-of-concept parser that recovers a range of null elements and trace types.
\end{abstract}

\section{Introduction}

Many syntactic representations use graphs and/or discontinuous structures, such as traces in Government and Binding theory and f-structure in Lexical Functional Grammar \parencite{gb,Bresnan:1982}.
Sentences in the Penn Treebank \parencite[\ptb,][]{ptb} have a core projective tree structure and trace edges that represent control structures, wh-movement and more.
However, most parsers and the standard evaluation metric ignore these edges and all null elements.
By leaving out parts of the structure, they fail to provide key relations to downstream tasks such as question answering.
While there has been work on capturing some parts of this extra structure, it has generally either been through post-processing on trees
\parencite{Johnson:2002,Jijkoun:2003,Campbell:2004,Levy:2004,Gabbard:2006}
or has only captured a limited set of phenomena via grammar augmentation
\parencite{collins:1997,dienes-dubey:2003,schmid:2006,cai-chiang-goldberg:2011}.

We propose a new general-purpose parsing algorithm that can efficiently search over a wide range of syntactic phenomena.
Our algorithm extends a non-projective tree parsing algorithm \parencite{ec,ec-gp} to graph structures, with improvements to avoid derivational ambiguity while maintaining an $O(n^4)$ runtime.
Our algorithm also includes an optional extension to ensure parses contain a directed projective tree of non-trace edges.

Our algorithm cannot apply directly to constituency parses--it requires lexicalized structures similar to dependency parses.
We extend and improve previous work on lexicalized constituent representations \parencite{cck,Shen:2007,hayashi-nagata:2016} to handle traces.
In this form, traces can create problematic structures such as directed cycles, but we show how careful choice of head rules can minimize such issues.

We implement a proof-of-concept parser, scoring $88.1$ on trees in section 23 and $70.6$ on traces.
Together, our representation and algorithm cover $97.3\%$ of sentences, far above the coverage of projective tree parsers ($43.9\%$).

\section{Background}

This work builds on two areas: non-projective tree parsing, and parsing with null elements.

\textbf{Non-projectivity} is important in syntax for representing many structures, but inference over the space of all non-projective graphs is intractable.
Fortunately, in practice almost all parses are covered by well-defined subsets of this space.
For dependency parsing, recent work has defined algorithms for inference within various subspaces \parencite{Gomez-Rodriguez:2010,ec}.
We build upon these algorithms and adapt them to constituency parsing.
For constituency parsing, a range of formalisms have been developed that are mildly-context sensitive, such as CCG \parencite{Steedman:2000}, LFG \parencite{Bresnan:1982}, and LTAG \parencite{Joshi:1997}.

Concurrently with this work, \textcite{Cao-etal:2017:ACL} also proposed a graph version of \textcite{ec}'s One-Endpoint Crossing (\oneEC) algorithm.
However, Cao's algorithm does not consider the direction of edges\footnote{
  To produce directed edges, their parser treats the direction as part of the edge label.
} and so it could produce cycles, or graphs with multiple root nodes.
Their algorithm also has spurious ambiguity, with multiple derivations of the same parse structure permitted.
One advantage of their algorithm is that by introducing a new item type it can handle some cases of the \problemStruct we define below (specifically, when $N$ is even), though in practise they also restrict their algorithm to ignore such cases.
They also show that the class of graphs they generate corresponds to the \oneEC pagenumber-2 space, a property that applies to this work as well\footnote{
  This is a topological space with two half-planes sharing a boundary.
  All edges are drawn on one of the two half-planes and each half-plane contains no crossings.
}.

\textbf{Parsing with Null Elements} in the \ptb has taken two general approaches.
The first broadly effective system was \textcite{Johnson:2002}, which post-processed the output of a parser, inserting extra elements.
This was effective for some types of structure, such as null complementizers, but had difficulty with long distance dependencies.
The other common approach has been to thread a trace through the tree structure on the non-terminal symbols.
\textcite{collins:1997}'s third model used this approach to recover wh-traces, while \textcite{cai-chiang-goldberg:2011} used it to recover null pronouns, and others have used it for a range of movement types \parencite{dienes-dubey:2003,schmid:2006}.
These approaches have the disadvantage that each additional trace dramatically expands the grammar.

Our representation is similar to LTAG-Spinal \parencite{Shen:2007} but has the advantage that it can be converted back into the \ptb representation.
\textcite{hayashi-nagata:2016} also incorporated null elements into a spinal structure but did not include a representation of co-indexation.
In related work, dependency parsers have been used to assist in constituency parsing, with varying degrees of representation design, but only for trees \parencite{hall2007hybrid,hall-nivre:2008:PaGe,kong-rush-smith:2015:NAACL-HLT,fernandezgonzalez-martins:2015:ACL-IJCNLP}.

\textcite{kato-matsubara:2016} described a new approach, modifying a transition-based parser to recover null elements and traces, with strong results, but using heuristics to determine trace referents.

\section{Algorithm} \label{sec:overall-algorithm}

\tikzset{%
  not allowed/.style={%
    dotted,
    very thick,
    color=red
  }
}
\tikzset{%
  required/.style={%
  }
}
\tikzset{%
  optional/.style={%
    dashed
  }
}
\tikzset{%
  pointO/.style={%
    fill=black,regular polygon, regular polygon sides=4,inner sep=1pt
  }
}
\newlength\vertSmall
\newlength\vertBig
\newlength\vertBigger
\newlength\labelGap
\newlength\coordGap

Our algorithm is a dynamic program, similar at a high level to CKY \parencite{Cocke:1969,Kasami:1966,Younger:1967}.
The states of our dynamic program (\emph{items}) represent partial parses.
Usually in CKY, items are defined as covering the $n$ words in a sentence, starting and ending at the spaces between words.
We follow \textcite{eisner:1996}, defining items as covering the $n{-}1$ spaces in a sentence, starting and ending on words, as shown in Figure~\ref{fig:alg-example}.
This means that we process each word's left and right dependents separately, then combine the two halves.

We use three types of items:
(1) a single \emph{edge}, linking two words,
(2) a continuous \emph{span}, going from one word to another, representing all edges linking pairs of words within the span,
(3) a span (as defined in 2) plus an additional word outside the span, enabling the inclusion of edges between that word and words in the span.

Within the CKY framework, the key to defining our algorithm is a set of rules that specify which items are allowed to combine.
From a bottom-up perspective, a parse is built in a series of steps, which come in three types:
(1) adding an edge to an item,
(2) combining two items that have non-overlapping adjacent spans to produce a new item with a larger span,
(3) combining three items, similarly to (2).

\tightparagraph{Example}
To build intuition for the algorithm, we will describe the derivation in Figure~\ref{fig:alg-example}.
Note, item sub-types (I, X, and N) are defined below, and included here for completeness.

\begin{figure}
\centering
\scalebox{0.8}{ \tikzset{%
  leftParent/.style={%
    ->,
    >=Stealth,
    shorten >=1pt,
    thin
  }
}%
\tikzset{%
  rightParent/.style={%
    <-,
    >=Stealth,
    shorten <=1pt,
    thin
  }
}%
\tikzset{%
  len1/.style={%
    out=30,
    in=150
  }
}%
\tikzset{%
  len2/.style={%
    out=32,
    in=148
  }
}%
\tikzset{%
  len3/.style={%
    out=34,
    in=146
  }
}%
\tikzset{%
  len4/.style={%
    out=34,
    in=146
  }
}%
\tikzset{%
  extPoint/.style={%
    fill=black,regular polygon, regular polygon sides=4,inner sep=0.75pt
  }
}%
\tikzset{%
  myGuide/.style={%
    densely dotted,
    thick,
    color=black!25
  }
}%
\begin{tikzpicture}%
  \pgfmathsetlength{\vertSmall}{4ex}%
  \pgfmathsetlength{\vertBig}{6ex}%
  \coordinate (offset) at (0.3, 0);%
  \coordinate (voffset0) at (0, 0);%
  \coordinate (voffset1) at (0, 0.1);%
  \node (v0) at (0, 0) {};
  \node (w0) at (0, 0) {};
  \node (w1) at (2, 0) {};
  \node (w2) at (4, 0) {};
  \node (w3) at (6, 0) {};
  \node (w4) at (8, 0) {};

  \node (vA) [below=\vertBig of v0] {};
  \node (v1) [above=\vertSmall of v0] {};
  \node (v2) [above=\vertSmall of v1] {};
  \node (v3) [above=\vertSmall of v2] {};
  \node (v4) [above=\vertSmall of v3] {};
  \node (v5) [above=\vertSmall of v4] {};

  \draw [myGuide] (w0 |- vA) -- (w0 |- v5);
  \draw [myGuide] (w1 |- vA) -- (w1 |- v3);
  \draw [myGuide] (w2 |- vA) -- (w2 |- v5);
  \draw [myGuide] (w3 |- vA) -- (w3 |- v5);
  \draw [myGuide] (w4 |- vA) -- (w4 |- v5);

  \node (w0text) [below=\vertBig of w0] {\strut ROOT};
  \node (w1text) [below=\vertBig of w1] {\strut We};
  \node (w2text) [below=\vertBig of w2] {\strut like};
  \node (w3text) [below=\vertBig of w3] {\strut running};
  \node (w4text) [below=\vertBig of w4] {\strut .};

  \draw [leftParent,len2] ($(w0 |- vA) + (offset)$) to ($(w2 |- vA)$);
  \draw [rightParent,len1] ($(w1 |- vA) + (offset)$) to ($(w2 |- vA) - (offset) + (voffset0)$);
  \draw [rightParent,len2] ($(w1 |- vA)$) to ($(w3 |- vA)$);
  \draw [leftParent,len1] ($(w2 |- vA) + (offset)$) to ($(w3 |- vA) - (offset) + (voffset0)$);
  \draw [leftParent,len2] ($(w2 |- vA) + (voffset0)$) to ($(w4 |- vA) - (offset)$);

  \draw ($(w0) + (offset)$) -- node[at start,below=-2pt] {\small $\;$ I$_{0,1}$} ($(w1) - (offset)$);
  \draw ($(w1) + (offset)$) -- node[at start,below=-2pt] {\small $\;$ I$_{1,2}$} ($(w2) - (offset)$);
  \draw ($(w2) + (offset)$) -- node[at start,below=-2pt] {\small $\;$ I$_{2,3}$} ($(w3) - (offset)$);
  \draw ($(w3) + (offset)$) -- node[at start,below=-2pt] {\small $\;$ I$_{3,4}$} ($(w4) - (offset)$);
  \node [anchor=east] (step0) at (w0 |- v0) {\strut \small (1)};

  \draw ($(w1 |- v1) + (offset)$) -- node[at start,below=-2pt] {\small $\;$ I$_{1,2}$} ($(w2 |- v1) - (offset)$);
  \draw ($(w2 |- v1) + (offset)$) -- node[at start,below=-2pt] {\small $\;$ I$_{2,3}$} ($(w3 |- v1) - (offset)$);
  \draw ($(w3 |- v1) + (offset)$) -- node[at start,below=-2pt] {\small $\;\;$ X$_{3,4,2}$} ($(w4 |- v1) - (offset)$);
  \draw [rightParent,len1] ($(w1 |- v1) + (offset)$) to ($(w2 |- v1) - (offset) + (voffset0)$);
  \draw [leftParent,len1] ($(w2 |- v1) + (offset) + (voffset0)$) to ($(w3 |- v1) - (offset)$);
  \node [extPoint] at (w2 |- v1) {};
  \draw [leftParent,len2] ($(w2 |- v1)$) to ($(w4 |- v1) - (offset)$);
  \node [anchor=east] (step0) at (w0 |- v1) {\strut \small (2)};

  \draw ($(w1 |- v2) + (offset)$) -- node[at start,below=-2pt] {\small $\;\;$ X$_{1,2,3}$} ($(w2 |- v2) - (offset)$);
  \draw [rightParent,len1] ($(w1 |- v2) + (offset)$) to ($(w2 |- v2) - (offset) + (voffset0)$);
  \draw [rightParent,len2] ($(w1 |- v2) + (offset) + (voffset1)$) to ($(w3 |- v2) + (voffset0)$);
  \node [extPoint] at (w3 |- v2) {};
  \node [anchor=east] (step0) at (w0 |- v2) {\strut \small (3)};

  \draw ($(w0 |- v3) + (offset)$) -- node[at start,below=-2pt] {\small $\;\;$ N$_{0,2,3}$} ($(w2 |- v3) - (offset)$);
  \draw [rightParent,len1] ($(w1 |- v3) + (offset)$) to ($(w2 |- v3) - (offset) + (voffset0)$);
  \draw [rightParent,len2] ($(w1 |- v3)$) to ($(w3 |- v3) + (voffset0)$);
  \node [extPoint] at (w3 |- v3) {};
  \node [anchor=east] (step0) at (w0 |- v3) {\strut \small (4)};

  \draw ($(w0 |- v4) + (offset)$) -- node[at start,below=-2pt] {\small $\;\;$ N$_{0,2,3}$} ($(w2 |- v4) - (offset)$);
  \draw [leftParent,len2] ($(w0 |- v4) + (offset)$) to ($(w2 |- v4) - (offset) + (voffset1)$);
  \draw [rightParent,len1] ($(w1 |- v4) + (offset)$) to ($(w2 |- v4) - (offset) + (voffset0)$);
  \draw [rightParent,len2] ($(w1 |- v4)$) to ($(w3 |- v4) + (voffset0)$);
  \node [extPoint] at (w3 |- v4) {};
  \node [anchor=east] (step0) at (w0 |- v4) {\strut \small (5)};

  \draw ($(w0 |- v5) + (offset)$) -- node[at start,below=-2pt] {\small $\;\;$ I$_{0,4}$} ($(w4 |- v5) - (offset)$);
  \draw [leftParent,len2] ($(w0 |- v5) + (offset)$) to ($(w2 |- v5)$);
  \draw [rightParent,len1] ($(w1 |- v5) + (offset)$) to ($(w2 |- v5) - (offset) + (voffset0)$);
  \draw [rightParent,len2] ($(w1 |- v5)$) to ($(w3 |- v5) + (voffset0)$);
  \draw [leftParent,len1] ($(w2 |- v5) + (offset) + (voffset0)$) to ($(w3 |- v5) - (offset)$);
  \draw [leftParent,len2] ($(w2 |- v5)$) to ($(w4 |- v5) - (offset)$);
  \node [anchor=east] (step0) at (w0 |- v5) {\strut \small (6)};

\end{tikzpicture}
\vspace{-5mm}
\caption{\label{fig:alg-example}
An example derivation using our graph parsing deduction rules.
}
\end{figure}
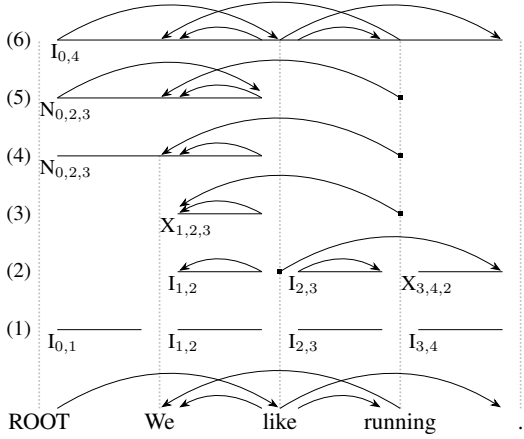

\noindent
(1) We initialize with spans of width one, going between adjacent words, \myeg between \emph{ROOT} and \emph{We}. \\
\strut\hfill $\emptyset \; \mapsto \; I_{0,1}$\hfill\strut

\noindent
(2) Edges can be introduced in exactly two ways, either by linking the two ends of a span, \myeg \emph{like}--\emph{running}, or by linking one end of a span with a word outside the span, \myeg \emph{like}--.\@\xspace (which in this case forms a new item that has a span and an external word). \\
\strut\hfill$I_{2,3} \; \land \; like$--$running \; \mapsto \; I_{2,3}$\hfill\strut \\
\strut\hfill$I_{3,4} \; \land \; like$--.$ \; \mapsto \; X_{3,4,2}$\hfill\strut

\noindent
(3) We add a second edge to one of the items. \\
\strut\hfill$I_{1,2} \; \land \; running$--$We \; \mapsto \; X_{1,2,3}$\hfill\strut

\noindent
(4) Now that all the edges to \emph{We} have been added, the two items either side of it are combined to form an item that covers it. \\
\strut\hfill$I_{0,1} \; \land \; X_{1,2,3} \; \mapsto \; N_{0,2,3}$\hfill\strut

\noindent
(5) We add an edge, creating a crossing because \emph{We} is an argument of a word to the right of \emph{like}. \\
\strut\hfill$N_{0,2,3} \; \land \; ROOT$--$like \; \mapsto \; N_{0,2,3}$\hfill\strut

\noindent
(7) We use a ternary rule to combine three adjacent items.
In the process we create another crossing. \\
\strut\hfill$N_{0,2,3} \; \land \; I_{2,3} \; \land \; X_{3,4,2} \; \mapsto \; I_{0,6}$\hfill\strut

\subsection{Algorithm definition}

\paragraph{Notation}

Vertices are $p$, $q$, etc.
Continuous ranges are $[pq]$, $[pq)$, $(pq]$, or $(pq)$, where the brackets indicate inclusion, $[\,]$, or exclusion, $(\,)$, of each endpoint.
A span $[pq]$ and vertex $o$ that are part of the same item are $[pq.o]$.
Two vertices and an arrow indicate an edge, $\vec{pq}$.
Two vertices without an arrow are an edge in either direction, $pq$.
Ranges and/or vertices connected by a dash define a set of edges, \myeg the set of edges between $o$ and $(pq)$ is $o$--$(pq)$ (in some places we will also use this to refer to an edge from the set, rather than the whole set).
If there is a path from $p$ to $q$, $q$ is \textit{reachable} from $p$.

\paragraph{Item Types}

As shown in Figure~\ref{fig:alg-example}, our items start and end on words, fully covering the spaces in between.
Earlier we described three item types: an edge, a span, and a span plus an external vertex.
Here we define spans more precisely as $I$, and divide the span plus an external point case into five types differing in the type of edge crossing they contain:

\begingroup
\setlength{\columnsep}{6pt}%
\setlength{\intextsep}{3pt}%

\begin{wrapfigure}[2]{r}{0pt}
  \begin{tikzpicture}
    \node (p) at (0, 0) {$p$};
    \node (q) at (1, 0) {$q$};
    \draw (p.north) -- (q.north);
  \end{tikzpicture}
\end{wrapfigure}
\noindent \textbf{$I$, Interval} A span for which there are no edges $sr : r \in (pq)$ and $s \notin [pq]$.

\noindent
\begin{wrapfigure}[2]{r}{0pt}
  \begin{tikzpicture}
    \node (p) at (0, 0) {\phantom{$o$}};
    \node (q) at (1.0, 0) {\phantom{$o$}};
    \node (o) at (1.5, 0) {$o$};
    \node [pointO] at (o.north) {};
    \draw (p.north) -- (q.north);
    \draw [out=45,in=135] (p.north) to (o.north);
  \end{tikzpicture}
\end{wrapfigure}
\noindent \textbf{$X$, Exterval} An interval and either $op$ or $oq$, where $o \notin [pq]$.

\noindent
\begin{wrapfigure}[2]{r}{0pt}
  \begin{tikzpicture}
    \node (p) at (0, 0) {};
    \node (m1) at (0.3, 0) {};
    \node (m2) at (0.6, 0) {};
    \node (m3) at (0.9, 0) {};
    \node (m4) at (1.2, 0) {};
    \node (q) at (1.5, 0) {};
    \node (o) at (2, 0) {};
    \draw (p.center) -- (q.center);
    \node [pointO] at (o.center) {};
    \draw [out=45,in=135] (m1.center) to (o.center);
    \draw [out=45,in=135] (m4.center) to (o.center);
    \draw [out=45,in=135] (p.center) to (m2.center);
    \draw [out=45,in=135] (m3.center) to (q.center);
  \end{tikzpicture}
\end{wrapfigure}
\noindent \textbf{$B$, Both} A span and vertex $[pq.o]$, for which there are no edges $sr : r \in (pq)$ and $s \notin [pq] \cup o$.
Edges $o$--$[pq]$ may be crossed by $pq$, $p$--$(pq)$ or $q$--$(pq)$, and at least one crossing of the second and third types occurs.
Edges $o$--$(pq)$ may not be crossed by $(pq)$--$(pq)$ edges.

\begin{wrapfigure}[3]{r}{0pt}
  \begin{tikzpicture}
    \node (p) at (0, 0) {};
    \node (m) at (0.5, 0) {};
    \node (m2) at (1.0, 0) {};
    \node (q) at (1.5, 0) {};
    \node (o) at (2, 0) {};
    \draw (p.center) -- (q.center);
    \node [pointO] at (o.center) {};
    \draw [out=45,in=135] (m.center) to (o.center);
    \draw [out=45,in=135] (p.center) to (m2.center);

    \node (rp) at (0, -0.6) {};
    \node (rm) at (0.5, -0.6) {};
    \node (rm2) at (1.0, -0.6) {};
    \node (rq) at (1.5, -0.6) {};
    \node (ro) at (2, -0.6) {};
    \draw (rp.center) -- (rq.center);
    \node [pointO] at (ro.center) {};
    \draw [out=45,in=135] (rm2.center) to (ro.center);
    \draw [out=45,in=135] (rm.center) to (rq.center);
  \end{tikzpicture}
\end{wrapfigure}
\noindent \textbf{$L$, Left} Same as $B$, but $o$--$(pq)$ edges may only cross $p$--$(pq]$ edges. \\
\noindent \textbf{$R$, Right} Symmetric with $L$.

\begin{wrapfigure}[2]{r}{0pt}
  \begin{tikzpicture}
    \node (p) at (0, 0) {};
    \node (m) at (0.5, 0) {};
    \node (q) at (1.5, 0) {};
    \node (o) at (2, 0) {};
    \draw (p.center) -- (q.center);
    \node [pointO] at (o.center) {};
    \draw [out=45,in=135] (m.center) to (o.center);
  \end{tikzpicture}
\end{wrapfigure}
\noindent \textbf{$N$, Neither} An interval and a vertex $[pq.o]$, with at least one $o$--$(pq)$ edge, which can be crossed by $pq$, but no other $[pq]$--$[pq]$ edges.

\endgroup

Items are further specified as described in Alg.~\ref{alg:rules}.
Most importantly, for each pair of $o$, $p$, and $q$ in an item, the rules specify whether one is a parent of the other, and if they are directly linked by an edge.

For an item $H$ with span $[ij]$, define $\vertexCovMath(H)$ as $(ij)$, and define $\vertexVisMath(H)$ as $\{i, j\}$.
When an external vertex $x$ is present, it is in $\vertexVisMath(H)$.
Also, call the union of multiple such sets $\vertexCovMath(F, G, H)$, and $\vertexVisMath(F, G, H)$.

\begin{algorithm*}
\footnotesize

\uline{Adding Edges}: Consider a span $[lr]$ and vertex $x \notin [lr]$. \\
Edges between $l$ and $r$ can be added to items $I$, $N$, $L$, $R$, and $B$ (making $\hat{L}$ and $\hat{N}$ in those cases). \\
Edges between $l$ and $x$ can be added to items $I$ (forming an $X$), $R$, and $N$. \\
Edges between $r$ and $x$ can be added to items $I$ (forming an $X$), $L$, and $N$. \\
The $l$--$r$ edge cannot be added after another edge, and $N$ items cannot get both $l$--$x$ and $r$--$x$ edges.

\uline{Combining Items}: In the rules below the following notation is used: \\
For this explanation items are $T[lr \; c_{rl} \; c_{lr}]$ and $T[lrx \; c_{rl} \; c_{xl} \; c_{lr} \; c_{xr} \; c_{lx} \; c_{rx}]$. \\
$T$ is the type of item. Multiple letters indicate any of those types are allowed. \\
For the next three types of notation, if an item does not have a mark, either option is valid. \\
$\underdot{T}$ and $\uwave{T}$ indicate the number of edges between the external vertex and the span: one or more than one respectively. \\
\textbf{$\cdotp$}$T$ and $T$\textbf{$\cdotp$} indicate the position of the external vertex relative to the item's span (left or right respectively). \\
$\hat{T}$ indicates for $N$ and $L$ that $\forall p \in (ij) \exists rs : i{\le}r{<}p{<}s{\le}j$. In (11) and (12) it is optional, but true for output iff true for input. \\
$l$, $r$, and $x$: the position of the left end of the span, the right end, and the external vertex, respectively. \\ 
$c_{rl}$, $c_{xl}$, etc: connectivity of each pair of \vertexVis vertices, from the first subscript to the second.
Using $c_{rl}$ as an example, these can be $.$ (unconstrained), $d$ ($\vec{rl}$ must exist), $p$ ($l$ is reachable from $r$, but $\vec{rl}$ does not exist), $n$ ($l$ is not reachable from $r$), $\overline{d}$ ($= p \lor n$), $\overline{n}$~($= d \lor p$).
Note: In the generated rules every value is $d$, $p$, or $n$, leading to multiple rules per template below.

\vspace{-8mm}
\begin{multicols}{2}
\footnotesize
\begin{flalign*}
& \begin{array}{l}
  I[ij \;\; n\overline{d}] \leftarrow \max \\
  \left\{
    \begin{array}{l}
      \begin{array}{l}
        (\mathrm{Init}) \;\;\; j=i{+}1 \\
        (1) \;\;\; I[i \;\; i{+}1 \;\; nn] \quad I[i{+}1 \;\; j \;\; \overline{n}n] \\
      \end{array} \\
      \max_{k \in (i, j)} \\
      \left\{
        \begin{array}{l}
          (2) \;\;\; I[ik \;\; nd] \quad I[kj \;\; ..] \\
          (3) \;\;\; BLRN\cdotp [ikj \;\; nnd\overline{d}\overline{dd}] \quad I[kj \;\; ..] \\
          \max_{l \in (k, j)} \\
          \left\{
            \begin{array}{l}
              (4) \;\;\; RN\cdotp [ikl \;\; nnd\overline{d}\overline{dd}] \quad I[kl \;\; ..] \quad \cdotp LNX[ljk \;\; .\overline{d}..\overline{d}.] \\
              (5) \;\;\; BLRN\cdotp [ikl \;\; nnd\overline{d}\overline{dd}] \quad I[kl \;\; ..] \quad I[lj \;\; ..] \\
            \end{array}
          \right. \\
          \max_{l \in (i, k)} \\
          \left\{
            \begin{array}{l}
              (6) \;\;\; I[il \;\; n.] \quad \cdotp LN[lki \;\; .\overline{d}.dnn] \quad \cdotp \uwave{N}[kjl \;\; \overline{dd}\overline{d}.\overline{d}.] \\
              (7) \;\;\; RNX\cdotp [ilk \;\; nn.\overline{d}d\overline{d}] \quad I[lk \;\; ..] \quad \cdotp \uwave{LN}[kjl \;\; .\overline{d}..\overline{d}.] \\
            \end{array}
          \right. \\
        \end{array}
      \right. \\
    \end{array}
  \right. \\
\end{array} \\
& \begin{array}{l}
  B\cdotp [ijx \;\; nn\overline{dd}\overline{dd}] \leftarrow \max_{k \in (i, j)} \\
  \left\{
    \begin{array}{l}
      (8) \;\;\; \hat{L}\hat{N}\cdotp [ikx \;\; nn.\overline{d}\overline{dd}] \quad R\cdotp [kjx \;\; ...\overline{d}.\overline{d}] \\
      (9) \;\;\; \hat{L}\hat{N}\cdotp [ikx \;\; nn.\overline{d}\overline{dd}] \quad N\cdotp [kjx \;\; \overline{d}.d\overline{d}.\overline{d}] \\
      (10) \;\;\; \hat{L}\hat{N}\cdotp [ikx \;\; nn.\overline{d}\overline{dd}] \quad N\cdotp [kjx \;\; d.\overline{dd}.\overline{d}] \\
    \end{array}
  \right. \\
\end{array} \\
& \begin{array}{l}
\underdot{\hat{L}}[ijx \;\; \overline{dd}\overline{dd}\overline{dd}] \leftarrow \max_{k \in (i, j)} \\
  \left\{
    \begin{array}{l}
      (11) \;\;\; X[ikx \;\; .\overline{d}.dnn] \quad \cdotp \hat{L}\hat{N}[kji \;\; .\overline{d}.\overline{d}\overline{dd}] \\
      (12) \;\;\; X[ikx \;\; .\overline{d}.\overline{d}\overline{d}d] \quad \cdotp \hat{L}\hat{N}[kji \;\; .\overline{d}.\overline{d}\overline{dd}] \\
    \end{array}
  \right. \\
\end{array} \\
\end{flalign*}

\begin{flalign*}
\hspace{8mm}
& \begin{array}{l}
  \uwave{L}[ijx \;\; \overline{dd}\overline{dd}\overline{dd}] \leftarrow \max_{k \in (i, j)} \\
  \left\{
    \begin{array}{l}
      (13) \;\;\; LN[ikx \;\; .\overline{d}.d\overline{dd}] \quad \cdotp N[kji \;\; \overline{dd}\overline{dd}\overline{dd}] \\
      (14) \;\;\; LN[ikx \;\; .\overline{d}.\overline{d}\overline{d}d] \quad \cdotp N[kji \;\; \overline{dd}\overline{dd}\overline{dd}] \\
      (15) \;\;\; L[ikx \;\; .\overline{d}.d\overline{dd}] \quad I[kj \;\; ..] \\
      (16) \;\;\; L[ikx \;\; .\overline{d}.\overline{d}\overline{d}d] \quad I[kj \;\; ..] \\
      (17) \;\;\; N[ikx \;\; \overline{dd}dd\overline{dd}] \quad I[kj \;\; ..] \\
      (18) \;\;\; N[ikx \;\; \overline{dd}d\overline{d}\overline{d}d] \quad I[kj \;\; ..] \\
      (19) \;\;\; N[ikx \;\; d\overline{d}\overline{d}d\overline{dd}] \quad I[kj \;\; ..] \\
      (20) \;\;\; N[ikx \;\; d\overline{d}\overline{dd}\overline{d}d] \quad I[kj \;\; ..] \\
    \end{array}
  \right. \\
\end{array} \\
& \begin{array}{l}
  \uwave{N}[ijx \;\; \overline{dd}\overline{dd}\overline{dd}] \leftarrow \max_{k \in (i, j)} \\
  \left\{
    \begin{array}{l}
      (21) \;\;\; \cdotp N[ikx \;\; \overline{dd}\overline{d}d\overline{dd}] \quad I[kj \;\; ..] \\
      (22) \;\;\; \cdotp N[ikx \;\; \overline{dd}\overline{dd}\overline{d}d] \quad I[kj \;\; ..] \\
      (23) \;\;\; I[ik \;\; ..] \quad N\cdotp [kjx \;\; \overline{d}d\overline{dd}\overline{dd}] \\
      (24) \;\;\; I[ik \;\; ..] \quad N\cdotp [kjx \;\; \overline{dd}\overline{dd}d\overline{d}] \\
    \end{array}
  \right. \\
\end{array} \\
& \begin{array}{l}
  \underdot{N}[ijx \;\; \overline{dd}\overline{dd}\overline{dd}] \leftarrow \max_{k \in (i, j)} \\
  \left\{
    \begin{array}{l}
      (25) \;\;\; \cdotp X[ikx \;\; .\overline{d}.d\overline{dd}] \quad I[kj \;\; ..] \\
      (26) \;\;\; \cdotp X[ikx \;\; .\overline{d}.\overline{d}\overline{d}d] \quad I[kj \;\; ..] \\
      (27) \;\;\; I[ik \;\; ..] \quad X\cdotp [kjx \;\; .d.\overline{d}\overline{dd}] \\
      (28) \;\;\; I[ik \;\; ..] \quad X\cdotp [kjx \;\; .\overline{d}.\overline{d}d\overline{d}] \\
    \end{array}
  \right. \\
\end{array} \\
\end{flalign*}
\end{multicols}

\vspace{-8mm}
{\footnotesize
  $I[ij \;\; pn]$, 
$\cdotp B[ijx \;\; \overline{dd}nn\overline{dd}]$,
$\uwave{R}[ijx \;\; \overline{dd}\overline{dd}\overline{dd}]$,
}
and 
{\footnotesize $\underdot{R}[ijx \;\; \overline{dd}\overline{dd}\overline{dd}]$}
are symmetric with cases above.

\vspace{-1mm}
\caption{\label{alg:rules}
Dynamic program for Lock-Free, One-Endpoint Crossing, Directed, Acyclic graph parsing.
}
\end{algorithm*}

\paragraph{Deduction Rules}

To make the deduction rules manageable, we use templates to define some constraints explicitly, and then use code to generate the rules.
During rule generation, we automatically apply additional constraints to prevent rules that would leave a word in the middle of a span without a parent or that would form a cycle (proven possible below).
Algorithm~\ref{alg:rules} presents the explicit constraints.
Once expanded, these give rules that specify all properties for each item (general type, external vertex position relative to the item spans, connectivity of every pair of vertices in each item, etc).

The final item for $n$ vertices is an interval where the left end has a parent.
For parsing we assume there is a special root word at the end of the sentence.

\subsection{Properties}

\begin{definition} \label{def:1ec}
  A graph is \textbf{One-Endpoint Crossing} if, when drawn with vertices along the edge of a half-plane and edges drawn in the open half-plane above, for any edge $e$, all edges that cross $e$ share a vertex. Also let that vertex be \pencilPoint($e$).
\end{definition}

Aside from applying to graphs, this is the same as \textcite{ec}'s \oneEC tree definition.

\begin{definition} \label{def:prob-struct}
  A \textbf{\problemStruct} (shown in Fig.~\ref{fig:bad-structure}) is formed by a set of consecutive vertices in order from $0$ to $N$, where $N > 3$, with edges $\{ (0, N{-}1), (1, N)\} \cup \{(i, i{+}2) \forall i \in [0, N{-}2]\}$.
\end{definition}

\begin{definition} \label{def:lock-free}
  A graph is \textbf{Lock-Free} if it does not contain edges that form a \problemStruct.
\end{definition}

Note that in practise, most parse structures satisfy \oneEC, and the \problemStruct structure does not occur in the \ptb when using our head rules.

\begin{theorem} \label{theorem:complete}
  For the space of Lock-Free One-Endpoint Crossing graphs, the algorithm is sound, complete and gives unique decompositions.
\end{theorem}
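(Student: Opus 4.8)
The plan is to prove the three properties — soundness, completeness, and unique decomposition — separately, since each requires a different style of argument, but all three hinge on a careful case analysis over the item types ($I$, $X$, $B$, $L$, $R$, $N$) and the deduction rules (1)--(28) of Algorithm~\ref{alg:rules}. Throughout, the central invariant I would maintain is a characterization lemma: each item type, with its connectivity annotations $c_{rl}, c_{xl}, \ldots$, corresponds exactly to a set of admissible partial graphs satisfying the structural constraints in the item-type definitions (no edges $sr$ with $r \in (pq)$, $s \notin [pq]$ for intervals, the crossing conditions for $B$/$L$/$R$/$N$, etc.). Establishing that this correspondence is preserved by every rule is the workhorse for all three parts.

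\emph{Soundness} — every derived structure is a Lock-Free One-Endpoint Crossing directed acyclic graph — I would prove by induction on derivation length. The base case (Init) is trivial. For the inductive step I must check, rule by rule, that combining items whose partial graphs satisfy the invariant yields a partial graph still satisfying it. The One-Endpoint-Crossing property (Def.~\ref{def:1ec}) is maintained because each item type tracks exactly which crossings are permitted through a single external vertex $o$, and the combining rules only glue items along shared visible vertices where crossing edges funnel through a common \pencilPoint. Acyclicity and the directed-projective-tree constraint follow from the connectivity annotations: the values $d$, $p$, $n$ precisely record reachability between visible vertices, so a rule that would close a cycle is one that asserts a reachability already present in the opposite direction, and such rules are exactly the ones excluded during rule generation (``to prevent rules that $\ldots$ would form a cycle''). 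Lock-Freeness I would argue holds because the \problemStruct of Def.~\ref{def:prob-struct} requires a chain of overlapping crossings that no sequence of these items can assemble — this is where I expect to lean on the structural bookkeeping in $B$, $L$, $R$.

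\emph{Completeness} — every Lock-Free One-Endpoint Crossing DAG is derived — is the converse direction and, I expect, \textbf{the main obstacle}. Here I would take an arbitrary target graph $G$ in the space and exhibit a derivation, proceeding top-down: given $G$ drawn on the half-plane, I identify a splitting vertex $k$ and a decomposition of $G$'s span into sub-items matching the left-hand side of some rule, then recurse. The difficulty is showing that such a split \emph{always} exists and lands in a valid item type — in particular that the external-vertex mechanism suffices to capture every crossing pattern permitted by One-Endpoint-Crossing, and that Lock-Freeness is exactly the condition ruling out the configurations (the \problemStruct) that no single-external-vertex split can handle. I would structure this as: (i) show any such $G$ with more than one edge has a ``cut'' vertex $k$ respecting the crossing structure, using the One-Endpoint-Crossing property to bound how edges straddle $k$; (ii) verify the resulting pieces are themselves Lock-Free One-Endpoint Crossing and fall under one of the six item types; (iii) match the configuration to a rule. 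The interaction between the acyclicity constraints and the crossing constraints is the delicate part, since a valid graph-theoretic split must also respect the reachability annotations.

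\emph{Unique decomposition} I would prove by showing the rules partition the space: for each derivable item and each structure it can represent, exactly one rule with exactly one split point $k$ applies. This amounts to verifying the rule left-hand sides are mutually exclusive and jointly exhaustive given the connectivity and $\hat{T}$, $\underdot{T}$, $\uwave{T}$, $\cdotp T$ markings — which is precisely why those markings were introduced (the excerpt notes Cao's algorithm lacks them and suffers spurious ambiguity). Concretely I would argue that the position of the external vertex, the single-versus-multiple edge distinction, and the $\hat{T}$ flag together force a canonical choice of $k$, so no structure admits two derivations. The bulk of this is mechanical case-checking once the mutual-exclusivity of the annotated templates is set up, so I would organize it as a table over item types rather than grinding each rule individually.
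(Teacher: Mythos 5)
Your plan matches the paper's proof in essentials: the paper likewise proceeds by an exhaustive, mutually exclusive case analysis of the structures each item can represent (yielding uniqueness), shows each case decomposes top-down into sub-items matching exactly one rule (with Lemma~\ref{lemma:B-case} isolating the \problemStruct as the one undecomposable configuration, which is precisely where Lock-Freeness enters), and obtains soundness and completeness from the bidirectionality of the rules, handling acyclicity separately by induction on the connectivity annotations exactly as you describe. The paper only sketches this argument and defers the full case-by-case verification to the author's thesis, so your outline is at essentially the same level of detail and follows the same route as the published proof.
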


Our proof is very similar in style and structure to \textcite{ec}.
The general approach is to consider the set of structures an item could represent, and divide them into cases based on properties of the internal structure.
Then we show how each case can be decomposed into items, taking care to ensure all the properties that defined the case are satisfied.
Uniqueness follows from having no ambiguity in how a given structure could be decomposed.
Completeness and soundness follow from the fact that our rules apply equally well in either direction, and so our top-down decomposition implies a bottom-up formation.
To give intuition for the proof, we show the derivation of one rule below.
The complete proof can be found in \textcite{Kummerfeld:PhD}.
We do not include it here due to lack of space.

We do provide the complete set of rule templates in Algorithm~\ref{alg:rules}, and in the proof of Lemma~\ref{lemma:B-case} we show that the case in which an item cannot be decomposed occurs if and only if the graph contains a \problemStruct.
To empirically check our rule generation code, we checked that our parser uniquely decomposes all \oneEC parses in the \ptb and is unable to decompose the rest.

Note that by using subsets of our rules, we can restrict the space of structures we generate, giving parsing algorithms for projective DAGs, projective trees \parencite{eisner:1996}, or \oneEC trees \parencite{ec}.
Also, versions of these spaces with undirected edges could be easily handled with the same approach.

\begingroup
\setlength\intextsep{0pt}
\begin{wrapfigure}[3]{r}{0pt}%
\begin{tikzpicture}%
  [every fit/.style={rectangle,rounded corners,draw,inner sep=2pt}]%
  \node (p) at (0, 0) {};
  \node (q) [right=2cm of p] {};
  \node (a) at ($(p)!0.15!(q)$) {};
  \node (s) at ($(p)!0.4!(q)$) {};
  \node (t) at ($(p)!0.65!(q)$) {};
  \node (b) at ($(p)!0.85!(q)$) {};

  \node (pText) [below=-3pt of p.center] {p\strut};
  \node (qText) [below=-3pt of q.center] {q\strut};
  \node (sText) [below=-3pt of s.center] {s\strut};
  \node (tText) [below=-3pt of t.center] {t\strut};

  \draw [thick] (p.center) -- (q.center);
  \draw [thick,out=40,in=140,color=black] (p.center) to node (ps) [midway] {} (s.center);
  \draw [thick,out=60,in=120,color=black] (a.center) to node (aa) [midway] {} (t.center);
  \draw [thick,out=60,in=120,color=black] (s.center) to node (sc) [midway] {} (b.center);
  \node () [fit=(p) (q) (ps) (aa) (pText) (qText) (sText) (tText)] {};
\end{tikzpicture}%
\end{wrapfigure}%
\tightparagraph{Derivation of rule (4) in Algorithm~\ref{alg:rules}}
This rule applies to intervals with the substructure shown, and with no parent in this item for $p$.
They have at least one $p$--$(pq)$ edge (otherwise rule 1 applies).
The longest $p$--$(pq)$ edge, $ps$, is crossed (otherwise rule 2 applies).
Let $C$ be the set of $(ps)$--$(sq)$ edges (note: these cross $ps$).
Either all of the edges in $C$ have a common endpoint $t \in (sq)$, or if $|C| = 1$ let $t$ be the endpoint in $(sq)$ (otherwise rule 6 or 7 applies).
Let $D$ be the set of $s$--$(tq)$ edges.
$|D| > 0$ (otherwise rule 3 or 5 applies).

We will break this into three items.
First, $(st)$--$(tq]$ edges would violate the \oneEC property and $(st)$--$[ps)$ edges do not exist by construction.
Therefore, the middle item is an Interval $[st]$, the left item is $[ps.t]$, and the right item is $[tq.s]$ (since $|C| > 0$ and $|D| > 0$).
The left item can be either an $N$ or $R$, but not an $L$ or $B$ because that would violate the \oneEC property for the $C$ edges.
The right item can be an $X$, $L$, or $N$, but not an $R$ or $B$ because that would violate the \oneEC property for the $D$ edges.
We will require edge $ps$ to be present in the first item, and not allow $pt$.
To avoid a spurious ambiguity, we also prevent the first or third items from having $st$ (which could otherwise occur in any of the three items).
Now we have broken down the original item into valid sub-items, and we have ensured that those sub-items contain all of the structure used to define the case in a unique way. \\
\endgroup

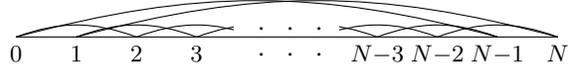
\begin{figure}
\centering
\begin{tikzpicture}
  \node (p) at (0, 0) {};
  \node (pText) [below=-2.0ex of p] {\footnotesize $0$\strut};
  \node (sp) at (0.8, 0) {};
  \node (spText) [below=-2.0ex of sp] {\footnotesize $1$\strut};
  \node (m1) at (1.6, 0) {};
  \node (m1Text) [below=-2.0ex of m1] {\footnotesize $2$\strut};
  \node (m2) at (2.4, 0) {};
  \node (m2Text) [below=-2.0ex of m2] {\footnotesize $3$\strut};
  \node (m3) at (3.2, 0) {};
  \node (m4) at (4, 0) {};
  \node (m5) at (4.8, 0) {};
  \node (m5Text) [below=-2.0ex of m5] {\footnotesize $N{-}3$\strut};
  \node (m6) at (5.6, 0) {};
  \node (m6Text) [below=-2.0ex of m6] {\footnotesize $N{-}2$\strut};
  \node (sq) at (6.4, 0) {};
  \node (sqText) [below=-2.0ex of sq] {\footnotesize $N{-}1$\strut};
  \node (q) at (7.2, 0) {};
  \node (qText) [below=-2.0ex of q] {\footnotesize $N$\strut};
  \draw [out=15,in=165] (p.north) to (sq.north);
  \draw [out=15,in=165] (sp.north) to (q.north);
  \draw [out=20,in=160] (p.north) to (m1.north);
  \draw [out=20,in=160] (sp.north) to (m2.north);
  \draw [out=20,in=160] (m1.north) to (m3.north);
  \draw [out=20,in=160] (m2.north) to (m4.north);
  \draw [out=20,in=160] (m3.north) to (m5.north);
  \draw [out=20,in=160] (m4.north) to (m6.north);
  \draw [out=20,in=160] (m5.north) to (sq.north);
  \draw [out=20,in=160] (m6.north) to (q.north);
  \node [fill=white] (dots) at (3.6,0.25) {\small $\;$ . $\;$ . $\;$ .$\;$ };
  \node [fill=white] (dots) at (3.6,-0.1) {\small $\;$ . $\;$ . $\;$ .$\;$ };
  \draw (p.north) -- (q.north);
\end{tikzpicture}
\vspace{-5mm}
\caption{\label{fig:bad-structure} Visualization of \problemStruct structures.
  Note, the use of $0$ to $N$ does not imply this must span the entire sentence, these numbers are just for convenience in the definition.
}
\end{figure}

Now we will further characterize the nature of the Lock-Free restriction to the space of graphs.

\begin{lemma} \label{lemma:chain-limit}
  No edge in a \problemStruct in a \oneEC graph is crossed by edges that are not part of it.
\end{lemma}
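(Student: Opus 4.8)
The plan is to argue by contradiction: assume some edge $f$ that is not part of the \problemStruct crosses one of its edges $e$, and derive a violation of the \oneEC property. I label the vertices $0,\dots,N$ as in Definition~\ref{def:prob-struct} and note the structure is symmetric under the relabelling $i\mapsto N-i$, which swaps the two long edges $(0,N{-}1)$ and $(1,N)$ and preserves the set of short edges $(i,i{+}2)$. The key observation is that \emph{every} edge of the chain is already crossed, within the chain, by exactly two chain edges that meet at a single common vertex; by \oneEC this common vertex is forced to be its \pencilPoint, so any additional (external) crosser is pinned to pass through that same vertex. I use this to locate one endpoint of $f$ exactly, then exhibit a \emph{second} chain edge crossing $f$ that shares no vertex with $e$, contradicting \oneEC applied to $f$ itself.

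First I would pin down the pencil points. A short edge $(i,i{+}2)$ is crossed within the chain by $(i{-}1,i{+}1)$ and $(i{+}1,i{+}3)$ (with the boundary substitutions that $(0,2)$ is crossed by $(1,3)$ and $(1,N)$, and $(N{-}2,N)$ by $(N{-}3,N{-}1)$ and $(0,N{-}1)$); in every case the two crossers meet only at the midpoint, so $\pencilPointMath((i,i{+}2))=i{+}1$. The long edge $(0,N{-}1)$ is crossed within the chain by $(N{-}2,N)$ and $(1,N)$, which meet only at $N$, so $\pencilPointMath((0,N{-}1))=N$, and symmetrically $\pencilPointMath((1,N))=0$. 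Consequently an external crosser of a short edge must have the midpoint $i{+}1$ as an endpoint, and an external crosser of a long edge must have the far vertex ($N$ or $0$) as an endpoint.

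The contradiction then comes in two cases. If $f$ crosses a short edge $(i,i{+}2)$, then $f=(i{+}1,y)$ with, by symmetry, $y>i{+}2$; since $f\neq(i{+}1,i{+}3)$ we get $y\ge i{+}4$, and then either $(y{-}1,y{+}1)$ (when $y\le N{-}1$) or the long edge $(0,N{-}1)$ (when $y=N$) is a chain edge that also crosses $f$ and is vertex-disjoint from $(i,i{+}2)$. If instead $f$ crosses the long edge $(0,N{-}1)$, then $f=(z,N)$ with $z\in(0,N{-}1)$; excluding $z\in\{1,N{-}2\}$ (which would make $f$ the chain edges $(1,N)$ or $(N{-}2,N)$) leaves $2\le z\le N{-}3$, and then the short edge $(z{-}1,z{+}1)$ crosses $f$ while sharing no vertex with $(0,N{-}1)$. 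In each case $f$ is crossed by two vertex-disjoint chain edges, so these two crossers cannot both pass through $\pencilPointMath(f)$, contradicting \oneEC.

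The main obstacle is the boundary bookkeeping rather than any single hard idea. I would have to treat the end short edges $(0,2)$ and $(N{-}2,N)$ separately (their outward side has no room, and their pencil points are pinned with the help of a long edge), and handle the case where $f$'s far endpoint is the extreme vertex $N$ (resp.\ $0$), where the neighbouring short edge $(y{-}1,y{+}1)$ does not exist and the long edge must play the role of the second crosser. I would also check the smallest case $N=4$, where a direct count shows no external edge can cross any chain edge at all. The delicate part throughout is verifying that the second crosser I pick is a genuine chain edge, distinct from both $e$ and $f$, and provably disjoint from $e$; once that is confirmed in every case, the \oneEC contradiction is immediate.
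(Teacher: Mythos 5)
Your proof takes essentially the same route as the paper's: compute the pencil points of the \problemStruct's edges, use them to pin one endpoint of any external crosser, and then exhibit a second structure edge crossing that crosser so that \oneEC is violated; your case analysis and the disjointness checks all go through. The one loose end is that the \problemStruct need not span the whole sentence (per the caption of the paper's figure), so in your short-edge case the far endpoint $y$ of $f=(i{+}1,y)$ could lie beyond $N$ rather than in $\{i{+}4,\dots,N\}$; the paper closes this by handling the long-edge case first and noting that any such $f$ necessarily crosses a long edge and so would be forced by that edge's pencil point to pass through $0$ or $N$ (which it cannot), and you should order your two cases the same way.
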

\begin{proof}
  First, note that:
  $\pencilPointMath((0, N{-}1)) = N$,
  $\pencilPointMath((1, N)) = 0$, and
  $\{\pencilPointMath((i, i{+}2)) = i{+}1 \; \forall i \in [0, N{-}2]\}$
  Also, call the set $\{(i, i{+}2) \forall i \in [0, N{-}2]\}$, the \emph{chain}.
  
  Consider an edge $e$ that crosses an edge $f$ in a \problemStruct.
  Let $e_{in}$ be the end of $e$ that is between the two ends of $f$, and $e_{out}$ be the other end.
  One of $e$'s endpoints is at $\pencilPointMath(f)$, and $\pencilPointMath(e)$ is an endpoint of $f$.
  There are three cases:

  (i) $f = (1, N)$.
  Here, $e_{out} = \pencilPointMath(f) = 0$, and $e_{in} \in (1, N)$.
  For all vertices $v \in (1, N)$ there is an edge $g$ in the chain such that $v$ is between the endpoints of $g$.
  Therefore, $e$ will cross such an edge $g$.
  To satisfy the \oneEC property, $g$ must share an endpoint with $f$, which means $g$ is either $(1, 3)$ or $(N{-}2, N)$.
  In the first case, the \oneEC property forces $e = (0, 2)$, and in the second $e = (0, N{-}1)$, both of which are part of the \problemStruct.
  
  (ii) $f = (0, N{-}1)$, symmetrical with (i).

  (iii) $f = (i, i{+}2)$, for some $i \in [0, N{-}2]$.
  Here, $e_{in} = \pencilPointMath(f) = i{+}1$.
  We can assume $e$ does not cross $(0, N{-}1)$ or $(1, N)$, as those cases are covered by (i).
  As in (i), $e$ must cross another edge in the chain, and that edge must share an endpoint with $f$.
  This forces $e$ to be either $(i{-}1, i{+}1)$ or $(i{+}1, i{+}3)$ (excluding one or both if they cross $(0, N{-}1)$ or $(1, N)$), which are both in the \problemStruct.
\end{proof}

Our rules define a unique way to decompose almost any item into a set of other items.
The exception is $B$, which in some cases can not be divided into two items (\myie has no valid binary division).

\begin{lemma} \label{lemma:B-case}
  A $B[ij.x]$ has no valid binary division if and only if the graph has a \problemStruct.
\end{lemma}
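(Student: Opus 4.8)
The plan is to prove both directions by reducing ``valid binary division'' to a purely combinatorial condition on the edges inside the item, and then matching the obstruction against Definition~\ref{def:prob-struct}. Fix a $B[ij.x]$ and, by the symmetry noted after Algorithm~\ref{alg:rules}, assume $x$ lies to the right of $[ij]$. The only templates that can split it are (8)--(10), each of which picks a cut $k\in(i,j)$, keeps $x$ external to both halves, and asks for a left half over $[ik]$ of type $\hat L$ or $\hat N$ and a right half over $[kj]$ of type $R$ or $N$. Since a combination step never introduces edges, every edge must lie wholly inside one half; so I would first record that a cut at $k$ is impossible whenever (a) some $(ij)$--$(ij)$ edge strictly straddles $k$, (b) some $i$--$(ij)$ edge reaches past $k$, or (c) some $j$--$(ij)$ edge reaches before $k$. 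Conversely, when none of (a)--(c) holds the edge partition is forced, and because the $B$ type already forbids $(ij)$--$(ij)$ edges from crossing $x$--$(ij)$ edges, the left half's $x$-edges are crossed only by $i$-edges (making it $L$/$N$) and the right half's only by $j$-edges; the residual obligation is the hat/coverage requirement and the connectivity markers, which I would argue reduce to ``every interior point is straddled by an edge'' and hence coincide with (a)--(c).

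For the ($\Leftarrow$) direction I would assume a \problemStruct on vertices $0,\dots,N$ sits inside the item, with one long edge playing the role of the span edge $ij$, the other long edge being an $x$-edge crossed by an $i$-edge, and the far end of the chain being an $x$-edge crossed by a $j$-edge. By Lemma~\ref{lemma:chain-limit} none of these edges is crossed by anything outside the \problemStruct, so they behave as an isolated block, and I would check directly that every $k$ is blocked: the extreme points $k=i{+}1$ and $k=j{-}1$ fall under cases (b),(c) via the end chain edges, while each interior $k$ is straddled by the length-two edge $(k{-}1,k{+}1)$, which is case (a). This exhibits a covering of all of $(i,j)$ by forbidden intervals and rules out every binary division.

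The ($\Rightarrow$) direction is the substantive one, and I would reconstruct the \problemStruct from a failed covering. If no cut survives, the forbidden intervals of (a)--(c) cover every integer point of $(i,j)$, and I would peel this covering inward from both ends. The point $i{+}1$ can be blocked only by an $i$-edge, and I would show it must be exactly $(i,i{+}2)$, since a longer $i$-edge leaves its far endpoint uncovered and, together with the \oneEC property and the $B$-restriction that $x$--$(ij)$ edges meet only $i$-, $j$-, or $ij$-crossings, forces the very same chain to continue; symmetrically $j{-}1$ forces $(j{-}2,j)$. Each next uncovered point must be captured by a straddling interior edge, and minimality of the cover then forces each covering edge to have length exactly two and to overlap its neighbour, \myie to form a consecutive chain $(m,m{+}2)$. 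Finally I would show the chain's two ends must attach to $x$ through two crossing long edges and that the span edge supplies the remaining long edge, so the assembled incidences are exactly those of Definition~\ref{def:prob-struct}.

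The hard part will be this reconstruction: proving that the covering edges cannot be long or leave gaps without opening a legal cut, and that the chain necessarily closes with the two crossing long edges rather than some other \oneEC-compatible pattern. Here I would lean on the \pencilPoint bookkeeping used in Lemma~\ref{lemma:chain-limit} together with the $B$-specific fact that $x$--$(ij)$ edges are uncrossed by interior edges, which pins down where $x$ can attach and hence forces the two long edges into place. The last piece of bookkeeping is to confirm that the reconstructed structure simultaneously satisfies all the connectivity markers in the heads of (8)--(10), so that the obstruction is a genuine failure of those rules and not an artifact of a single template.
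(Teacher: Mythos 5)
Your overall strategy matches the paper's: characterize a blocked cut $k$ as one straddled by some $[ij]$--$[ij]$ edge, verify that a \problemStruct blocks every cut, and reconstruct the chain from a totally blocked $B$ using the \oneEC property. Two steps need repair. First, in the ($\Leftarrow$) direction you \emph{assume} the \problemStruct already sits inside the $B$ item with one long edge serving as the span edge $ij$ and the other long edge attached to $x$. But the lemma must account for a \problemStruct anywhere in the graph, so you also have to show that any decomposition of such a graph is forced to produce a $B$ in exactly that configuration. The paper does this by enumerating the possible spans of an item containing the \problemStruct ($[0,N{-}1]$, $[1,N]$, $[0,N]$, and the two ways of properly containing it) and arguing that the larger items always split so that one child still contains the entire \problemStruct, so the decomposition inevitably bottoms out at a $B$ over $[0,N{-}1]$ or $[1,N]$. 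Without that funneling argument your ($\Leftarrow$) direction establishes only a special case, and the lemma could not play its role in the completeness proof.

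Second, in the ($\Rightarrow$) reconstruction, ``minimality of the cover forces each covering edge to have length exactly two'' is not the right mechanism: no minimal cover is available, and nothing a priori bounds the length of a straddling edge. What forces the chain is \oneEC applied to the endpoints of the straddling edges themselves: the paper fixes the longest $i$-edge $ik$ and longest $j$-edge $lj$, observes that every point of $[kl]$ must be straddled, and then iterates the straddling requirement on the endpoints $l_1, r_1$ of a straddling edge; \oneEC forces each successive straddling edge to share an endpoint with the previous one, yielding overlapping edges $l_u l_{u-2}, \ldots, l_1 r_1, \ldots, r_{v-2} r_v$ that terminate inside $(ik)$ and $(lj)$, with $k=l_{u-1}$ and $l=r_{v-1}$. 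The ``length two'' of Definition~\ref{def:prob-struct} is then just a relabeling of the vertices actually involved, not a metric fact you can extract from a covering argument. Finally, you should justify the presence of the $ij$ edge that completes the \problemStruct: at the moment the division fails the $B$ need not yet contain it, but rules 3 and 5 are the only ones that consume a $B$ and both require the $ij$ edge, so it is guaranteed to exist in the overall graph.
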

\begin{proof}
  Consider the $k$ and $l$ that give the longest $ik$ and $lj$ edges in a $B$ with no valid binary division (at least one edge of each type must exist by definition).
  No vertex in $(ik)$ or $(jl)$ is a valid split point, as they would all require one of the items to have two external vertices.

  Now, consider $p \in [kj]$.
  If there is no edge $l_1r_1$, where $i \le l_1 < p < r_1 \le j$, then $p$ would be a valid split point.
  Therefore, such an edge must exist.
  Consider $l_1$, either $l_1 \in (ik)$ or there is an edge $l_2c$, where $i \le l_2 < l_1 < c \le j$ (by the same logic as for $l_1r_1$).
  Similarly, either $r_1 \in (jl)$ or there is an edge $cr_2$ (it must be $c$ to satisfy \oneEC).
  We can also apply this logic to edges $l_2c$ and $cr_2$, giving edges $l_3l_1$ and $r_1r_3$.
  This pattern will terminate when it reaches $l_u \in (ik)$ and $r_v \in (jl)$ with edges $l_ul_{u-2}$ and $r_{v-2}r_v$.
  Note that $k=l_{u-1}$ and $l=r_{v-1}$, to satisfy \oneEC.

  Since it is a $B$, there must be at least two $x$--$(ij)$ edges.
  To satisfy \oneEC, these end at $l_{u-1}$ and $r_{v-1}$.

  Let $x$ be to the right (the left is symmetrical), and call $i=0$, $j=N{-}1$, and $x=N$.
  Comparing with the \problemStruct definition, we have all the edges except one: $0$ to $N{-}1$.
  However, that edge must be present in the overall graph, as all $B$ items start with an $ij$ edge (see rules 3 and 5 in Algorithm~\ref{alg:rules}).
  Therefore, if there is no valid split point for a $B$, the overall graph must contain a \problemStruct.

  Now, for a graph that contains a \problemStruct, consider the items that contain the \problemStruct.
  Grouping them by their span $[ij]$, there are five valid options: $[0, N{-}1]$, $[1, N]$, $[0, N]$, ($i \leq 0 \land j > N$), and ($i < 0 \land j \geq N$).
  Items of the last three types would be divided by our rules into smaller items, one of which contains the whole \problemStruct.
  The first two are $B$s of the type discussed above.
\end{proof}

Now we will prove that our code to generate rules from the templates can guarantee a DAG is formed.

\begin{lemma} \label{lemma:reach}
  For any item $H$, $\forall v \in \vertexCovMath(H)$ \linebreak
  $\exists u \in \vertexVisMath(H): v$ is reachable from $u$.
\end{lemma}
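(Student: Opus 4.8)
The plan is to prove the statement by structural induction on the derivation of $H$ in the dynamic program of Algorithm~\ref{alg:rules}, since every item is built either by the $\mathrm{Init}$ rule, by adding a single edge, or by one of the combining rules (1)--(28) and their symmetric variants. In the base case $H = I[i\;\,i{+}1]$, so $\vertexCovMath(H) = (i\;\,i{+}1) = \emptyset$ and the claim holds vacuously. Adding an edge to an item leaves both $\vertexCovMath(H)$ and $\vertexVisMath(H)$ unchanged and can only enlarge the reachability relation, so edge-adding steps preserve the property trivially; the real content is in the combining rules.

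For the combining rules, I would first observe that $\vertexCovMath(H)$ is the disjoint union of (a)~the covered vertices of the sub-items and (b)~the \emph{junction} vertices that were visible in some sub-item but lie strictly inside the span of $H$ (the split point or points, together with any external vertex that becomes interior). This reduces the whole lemma to a single Key Claim: \emph{every visible vertex of every sub-item is reachable, in $H$, from some vertex of $\vertexVisMath(H)$}. Granting the Key Claim, a vertex $v$ of type (b) is itself a sub-item-visible vertex and is handled directly; a vertex $v$ of type (a) is, by the induction hypothesis, reachable from a visible vertex $u$ of its sub-item, and $u$ is reachable from $\vertexVisMath(H)$ by the Key Claim, so transitivity finishes $v$.

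The Key Claim is where the connectivity annotations do their work, and I would prove it rule by rule. A sub-item visible vertex $w$ either remains in $\vertexVisMath(H)$ (nothing to show) or is a junction vertex; in the latter case $w$ and at least one retained endpoint of $H$ are both visible in the \emph{same} sub-item, so the annotation between them, read from the retained endpoint to $w$, certifies reachability exactly when its value lies in $\overline{n} = \{d, p\}$ rather than being $n$. The essential point is that the rule-generation code's constraint ``prevent a word in the middle of a span without a parent'' is precisely the requirement that no junction vertex be left with only $n$-valued incoming annotations from the retained visible vertices; so for each template one checks that the annotation pattern forces some $\overline{n}$ edge into every absorbed vertex. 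Because these are visible-to-visible annotations within one sub-item, the argument never traces a path through covered vertices and therefore does not secretly assume the graph is acyclic --- which matters, since this lemma is itself a step toward the DAG guarantee.

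I expect the main obstacle to be the sheer size of the case analysis rather than any single hard idea: the templates (1)--(28) each expand, through the alphabet $\{d, p, n, \overline{d}, \overline{n}\}$, into many concrete rules, and one must confirm for all of them that every absorbed vertex receives an $\overline{n}$ annotation. The delicate cases are the ternary rules (4)--(7) and the external-vertex rules (8)--(28) for $B$, $L$, and $N$ items, where two vertices are absorbed at once and some annotations are only $\overline{d}$ (hence permit $n$); there one must verify that whenever the annotation to one retained endpoint degrades to $n$, the annotation to the other retained visible vertex (or through the external vertex) is forced into $\overline{n}$ by the template, so that reachability is never lost.
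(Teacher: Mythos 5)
Your proposal is correct and follows essentially the same route as the paper's proof: a vacuous base case, a trivial edge-addition case, and a reduction of the combination case to showing that every sub-item's visible vertices remain reachable from the new item's visible vertices, with that condition checked against the tracked pairwise connectivity annotations ($d$/$p$/$n$) at rule-generation time. Your write-up merely makes explicit the decomposition of $\vertexCovMath(H)$ into inherited covered vertices and absorbed junction vertices, and the per-template verification, which the paper compresses into the remark that the requirement ``can be enforced in rule generation.''
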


\begin{proof}
  This is true for initial items because $\vertexCovMath(H) = \emptyset$.
  To apply induction, consider adding edges and combing items.
  The lemma clearly remains true when adding an edge.
  Consider combining items $E$, $F$, $G$ to form $H[ij.x]$, and assume the lemma is true for $E$, $F$, and $G$ (the binary case is similar).
  Since all vertices are reachable from $\vertexVisMath(E,F,G)$, we only need to ensure that $\forall v \in \vertexVisMath(E,F,G)\; \exists u \in \vertexVisMath(H): v$ is reachable from $u$.
  The connectivity between all pairs $\{(u, v) \mid u \in \vertexVisMath(H), v \in \vertexVisMath(E,F,G)\}$ can be inferred from the item definitions, and so this requirement can be enforced in rule generation.
\end{proof}

\begin{lemma} \label{lemma:dag}
  The final item is a directed acyclic graph.
\end{lemma}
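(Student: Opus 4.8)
The plan is to prove, by induction on the construction of items (mirroring the induction used for Lemma~\ref{lemma:reach}), the conjunction of two invariants: (a) the connectivity annotations $c_{rl}, c_{lr}, c_{xl}, \dots$ carried by an item faithfully record reachability among its \vertexVis vertices, and (b) the item contains no directed cycle. The lemma is then the special case (b) applied to the final item. Carrying (a) alongside (b) is essential, because the cycle-avoidance constraints imposed during rule generation are phrased in terms of the annotations, so they only guarantee acyclicity if the annotations are correct.

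The base cases are immediate: an initial interval carries at most the $\mathrm{Init}$ edge and so has no cycle, and its trivial connectivity is recorded correctly. For edge addition, inserting $\vec{uv}$ between two \vertexVis vertices $u, v$ can close a directed cycle only if $v$ already reaches $u$ inside the item, \myie only if the pre-existing annotation $c_{vu} \in \{d, p\}$. By invariant (a) this condition is detectable from the annotations, and rule generation forbids the addition precisely in that case; otherwise the single new edge $uv$ lies on no cycle. The only bookkeeping here is propagating the new reachability facts created through $v$, which the templates encode as a deterministic annotation update.

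The combination step is the crux. Its engine is a structural boundary fact that follows directly from the item-type definitions: in any item $H$, every edge incident to a \vertexCov vertex has its other endpoint in $\vertexCovMath(H) \cup \vertexVisMath(H)$, and the only vertices through which the item's subgraph connects to the rest of the graph are its \vertexVis vertices (for an $I$ this is the interval condition itself; for $X, B, L, R, N$ the only boundary-crossing edges are the $o$--$[pq]$ edges, and $o$ is \vertexVis). I would record this observation first. Since the sub-items of a combination have disjoint edge sets and share only \vertexVis vertices (the split points $k, l$, the ends $i, j$, and the external vertex $x$), any directed cycle in $H$ either lies wholly inside one sub-item---excluded by the inductive hypothesis (b)---or transits between sub-items, and by the boundary fact every such transit occurs at a \vertexVis vertex. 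Contracting each sub-item to its \vertexVis vertices therefore maps a cycle of $H$ to a closed directed walk on the shared \vertexVis vertices, in which each step $u \Rightarrow w$ is a path realized inside a single sub-item and hence witnessed by that sub-item's annotation via (a). Such a closed walk is exactly a \vertexVis vertex reaching itself through the composed connectivity at the junction, which is the configuration that rule generation computes and rejects. Consequently $H$ inherits acyclicity, and the same connectivity computation that performs the rejection yields the correct output annotations, closing (a).

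The main obstacle is not the conceptual argument but its bookkeeping: one must verify the boundary fact and, above all, the correctness of the annotation composition for each of the templates (1)--(28) and their symmetric variants, including the ternary rules and the items carrying an external vertex, where crossing edges and the $\underdot{T}$, $\uwave{T}$, and $\hat{T}$ refinements complicate both the reachability composition and the enumeration of which inter-item transits are possible. Establishing invariant (a) uniformly across all these cases---so that the generated cycle constraints genuinely correspond to ``some \vertexVis vertex reaches itself''---is the labor-intensive heart of the proof; once it is in place, invariant (b) for the final item is immediate.
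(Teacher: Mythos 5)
Your proposal is correct and takes essentially the same route as the paper's proof: induction over item construction, with edge additions rejected when the recorded reachability annotation shows the target already reaches the source, and combinations rejected when composing the \vertexVis-vertex connectivity annotations would close a cycle, all enforced at rule-generation time. The only difference is that you state explicitly the auxiliary invariant that the annotations faithfully track reachability (and the boundary fact that paths cross between sub-items only at \vertexVis vertices), which the paper treats as implicit in the item definitions.
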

\begin{proof}
  First, consider acyclicity.
  Initial items do not contain any edges and so cannot contain a cycle.
  For induction, there are two cases:

  (i) Adding an Edge $\vec{pq}$ to an item $H$:
  Assume that $H$ does not contain any cycles.
  $\vec{pq}$ will create a cycle if and only if $p$ is reachable from $q$.
  By construction, $p$ and $q \in \vertexVisMath(H)$, and so the item definition contains whether $p$ is reachable from $q$.

  (ii) Combining Items:
  Assume that in isolation, none of the items being combined contain cycles.
  Therefore, a cycle in the combined item must be composed of paths in multiple items.
  A path in one item can only continue in another item by passing through a \vertexVis vertex.
  Therefore, a cycle would have to be formed by a set of paths between \vertexVis vertices.
  But the connectivity of every pair of \vertexVis vertices is specified in the item definitions.

  In both cases, rules that create a cycle can be excluded during rule generation.

  By induction, the items constructed by our algorithm do not contain cycles.
  Together with Lemma~\ref{lemma:reach} and the final item definition, this means the final structure is an acyclic graph with all vertices reachable from vertex $n$.
\end{proof}

Next, we will show two properties that give intuition for the algorithm.
Specifically, we will prove which rules add edges that are crossed in the final derivation.

\begin{lemma} \label{lemma:no-int-cross}
  An edge $ij$ added to $I[ij]$ is not crossed.
\end{lemma}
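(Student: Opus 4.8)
The plan is to combine the defining property of an $I$ item with the monotone structure of the derivation. Write the interval as $I[ij]$, so that $\vertexCovMath(I[ij]) = (ij)$ and $\vertexVisMath(I[ij]) = \{i,j\}$. An edge crosses $ij$ only if it has exactly one endpoint strictly inside $(ij)$ and its other endpoint strictly outside $[ij]$: an edge that touches $i$ or $j$ shares an endpoint with $ij$ and so does not cross it, and an edge with no endpoint in $(ij)$ cannot cross it either. Hence it suffices to show that in the final graph no edge runs from a vertex of $(ij)$ to a vertex outside $[ij]$.

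First I would invoke the $I$ definition at the moment $ij$ is added. By definition an $I$ over span $[ij]$ has no edge $sr$ with $r \in (ij)$ and $s \notin [ij]$, so at that point every edge incident to an interior vertex of the span already lies entirely within $[ij]$. Adding the edge $ij$ itself introduces an edge with both endpoints in $[ij]$, so it neither crosses $ij$ nor creates an interior-to-exterior edge.

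The second ingredient is that the set of edges incident to $(ij)$ is frozen once $I[ij]$ is formed. By the ``Adding Edges'' rules, new edges are only ever introduced between the \vertexVis vertices $l$, $r$, and $x$, and never touch a \vertexCov vertex. Moreover, combination only merges adjacent spans into a larger span, so \vertexCov vertices stay \vertexCov: a vertex interior to $[ij]$ is interior to every span containing $[ij]$, and in particular can never reappear as a span endpoint, nor as the external vertex $x$ (which by definition lies outside the item's span). Therefore no edge incident to a vertex of $(ij)$ is ever added after $I[ij]$ is formed.

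Putting these together, every edge with an endpoint in $(ij)$ is already present when $I[ij]$ is formed and has both endpoints in $[ij]$; thus no edge runs from $(ij)$ to outside $[ij]$, and $ij$ is not crossed. The only real work is the monotonicity claim of the previous paragraph---that once a vertex is \vertexCov it never again becomes \vertexVis and never acquires a new incident edge---which follows from spans growing monotonically under the combination rules together with edges attaching only at \vertexVis vertices; everything else is immediate from the $I$ definition.
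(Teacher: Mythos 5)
Your overall strategy matches the paper's: reduce the claim to showing that no item in the derivation has one \vertexVis vertex strictly inside $(ij)$ and another outside $[ij]$, since edges are only ever created between \vertexVis vertices. The part of your argument covering span endpoints is essentially sound (it amounts to the paper's first property, that no two items in a derivation have partially overlapping spans). But there is a genuine gap at the external-vertex case, which is precisely where the paper does its real work. You dismiss the possibility that a vertex $v \in (ij)$ serves as the external vertex of some other item with the parenthetical ``which by definition lies outside the item's span'' --- but that only says $v$ lies outside \emph{that item's own} span, not outside $(ij)$. Nothing in the item definitions forbids an item $[kl.y]$ whose span lies entirely to the right of $j$ (so the spans are disjoint and your monotonicity argument says nothing about it) but whose external vertex $y$ sits inside $(ij)$; such an item would add an edge $y$--$k$ or $y$--$l$ that crosses $ij$. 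Ruling this out is exactly the content of the paper's properties (2) and (3): for nested spans the inner item's external vertex must lie in the outer span (or coincide with the outer item's external vertex), and for non-nested spans the external vertex of one item cannot lie strictly inside the other's span. Both require an induction over the combination rules (``none of the rules combine such a pair of items, or allow one to be extended so that the other is nested within it''), not just the definition of an external vertex.

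A secondary issue: your argument is phrased temporally (``the set of edges incident to $(ij)$ is frozen once $I[ij]$ is formed''), but the derivation is a tree, and items in sibling branches are not ordered in time relative to $I[ij]$; the statement you actually need is a claim about every pair of items coexisting in one derivation, which is how the paper states its three properties. Once you supply the analogue of properties (2) and (3), your proof goes through and essentially coincides with the paper's.
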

\begin{proof}
  First, we will show three properties of any pair of items in a derivation (using $[ij.x]$ and $[kl.y]$).
  
  (1) \textit{It is impossible for either $i < k < j < l$ or $k < i < l < j$}, \myie items cannot have partially overlapping spans.
  As a base case, the final item is an interval spanning all vertices, and so no other item can partially overlap with it.
  Now assume it is true for an item $H$ and consider the rules in reverse, breaking $H$ up.
  By construction, each rule divides $H$ into items with spans that are adjacent, overlapping only at their \vertexVis vertices.
  Also, since the new items are nested within $H$, they do not overlap with any items $H$ did not overlap with.
  By induction, no pair of items have partially overlapping spans.
  
  (2) \textit{For items with nested spans ($i \le k < l \le j$), $y \in [ij] \cup \{x\}$}.
  Following the argument for the previous case, the $[ij.x]$ item must be decomposed into a set of items that includes $[kl.y]$.
  Now, consider how those items are combined.
  The rules that start with an item with an external vertex produce an item that either has the same external vertex, or with the external vertex inside the span of the new item.
  Therefore, $y$ must either be equal to $x$ or inside $[ij]$.

  (3) \textit{For items without nested spans, $x \notin (kl)$}.
  Assume $x \in (kl)$ for two items without nested spans.
  None of the rules combine such a pair of items, or allow one to be extended so that the other is nested within it.
  But all items are eventually combined to complete the derivation.
  By contradiction, $x \notin (kl)$.

  Together, these mean that given an interval $H$ with span $[ij]$, and another item $G$, either $\forall v \in \vertexVisMath(G), v \in [ij]$ or $\forall v \in \vertexVisMath(G), v \notin (ij)$.
  Since edges are only created between \vertexVis vertices, no edge can cross edge $ij$.
\end{proof}

\begin{lemma} \label{lemma:ext-crossing}
  All edges aside from those considered in Lemma~\ref{lemma:no-int-cross} are crossed.
\end{lemma}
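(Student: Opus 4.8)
The plan is to partition the edges that are \emph{not} covered by Lemma~\ref{lemma:no-int-cross} according to the rule that introduced them, and to show each such edge is crossed. By the edge-adding rules of Algorithm~\ref{alg:rules}, these fall into two groups: (i) an $l$--$r$ (span-endpoint) edge added to an $N$, $L$, $R$, or $B$ item, and (ii) an external edge, \myie an $l$--$x$ or $r$--$x$ edge added to an $I$ (forming an $X$), or to an $R$, $L$, or $N$. Throughout I would lean on the \emph{persistence} of crossings: since combining items only embeds them in a wider structure while preserving the left-to-right order of the shared \vertexVis vertices, and since edges are never removed, a crossing exhibited at or after the step that adds an edge survives to the final derivation. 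It therefore suffices to produce, for each edge, one other edge that crosses it at some point.

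For group (i) the crossing is immediate. The newly added edge is the $pq$ edge joining the two span endpoints, so it is crossed exactly by any edge with one endpoint in $(pq)$ and the other outside $[pq]$, \myie by an $o$--$(pq)$ edge. Such an edge is guaranteed by the item type: for $N$ it is required explicitly, and for $B$, $L$, and $R$ the required crossing of an $o$--$[pq]$ edge by a $p$--$(pq)$ or $q$--$(pq)$ edge forces the span-endpoint of the crossed $o$-edge to lie in $(pq)$, making it an $o$--$(pq)$ edge. Since $o \notin [pq]$ and the interior endpoint differs from $p$ and $q$, the $pq$ edge and this $o$--$(pq)$ edge interleave and hence cross; persistence then finishes this group.

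Group (ii) is the main obstacle, because an external edge $o$--$e$ (with $e \in \{p, q\}$) need not be crossed when it is added --- an $X$ in isolation is crossing-free. The plan is to show the crossing is forced once the carrying item is consumed. The cleanest reformulation is structural: an edge $ab$ is uncrossed if and only if its closed span $[ab]$ satisfies the Interval condition, \myie no edge runs from $(ab)$ to a vertex outside $[ab]$; so it is enough to show $[o, e]$ violates this condition in the final graph. The other span-endpoint $e' \in \{p, q\} \setminus \{e\}$ always lies strictly inside $[o, e]$, yet within the $X$ (or $R$/$L$/$N$) it has no edge leaving the span, so the straddling edge must be contributed by a later combination. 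I would establish this by walking through each rule that takes such an item as input --- for instance (4), (7), (11), (12), and (25)--(28), together with their symmetric partners --- and in each case exhibiting the partner-contributed edge, typically one running from the partner's external vertex into the partner's open span, that straddles $o$--$e$; rule (11), for instance, is of exactly this form, with the partner's external vertex $i$ sending an edge into $(kj)$ that crosses the $X$'s external edge $\vec{xk}$.

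The hard part is thus entirely in group (ii): unlike group (i) the crossing is non-local, so the argument rests on the global facts that these items are never the final item and that every consuming rule reintroduces a straddling edge. To keep the case analysis honest I would cross-check it against Theorem~\ref{theorem:complete}: because decompositions are unique and Lemma~\ref{lemma:no-int-cross} already identifies the uncrossed edges as exactly the $l$--$r$ edges of $I$ items, no edge introduced by any other rule can also be uncrossed --- but I would treat this only as a consistency check, not a substitute for the explicit rule-tracing, so as to avoid circularity.
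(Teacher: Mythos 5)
Your split into two groups and your treatment of group (i) match the paper: the $pq$ edge added to an $N$, $L$, $R$, or $B$ item is crossed by the $o$--$(pq)$ edge(s) those item types guarantee to exist, and crossings persist once created. The gap is in group (ii). You assert that \emph{every} rule consuming an item that carries an uncrossed external edge contributes a straddling edge at that step, and you list rules (25)--(28) among your witnesses. But those rules (and likewise (15)--(24)) combine the carrying item with an adjacent plain interval, whose edges all lie inside its own span and therefore cannot cross an edge from the external vertex $x \notin [ij]$ into the other item's span; the external edge simply survives, uncrossed, as an external edge of the larger output item, and no partner-contributed crossing exists at that step. Your subsidiary claim that the other span endpoint $e'$ always lies strictly inside $[o,e]$ also fails when $e$ is the endpoint nearer to $o$ (\myeg the edge $oq$ with $o$ to the right of $q$): there the eventual crossing must come from vertices in $(q,o)$ that the current item does not even cover, so no inspection of the carrying item or its immediate partner can produce it.

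The paper closes exactly this hole by making the deferral explicit rather than rule-local. It defines $E(F[kl.y])$ as the set of $y$--$[kl]$ edges of an item $F$, observes that the new edge $e$ lies in $E(G)$, and shows that every rule input $A$ falls into one of four cases: (1) all of $E(A)$ is crossed by an edge of a partner item in that rule, (2) $E(A) \subseteq E(C)$ for the output $C$, (3) the simultaneously added $kl$ edge crosses $E(A)$, or (4) $E(A)$ is already crossed. Termination then comes not from any single rule application but from the fact that the final item is an interval with $E(\mathrm{Interval}) = \emptyset$, so some later application must leave case (2), at which point $e$ is crossed. Your sketch contains the seeds of this (you note such items are never the final item), but without the invariant that uncrossed external edges remain in $E(\cdot)$ and the empty-set termination argument, the induction does not go through as written.
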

\begin{proof}
  First, consider an edge $ij$ added to an item $[ij.x]$ of type B, L, R, or N.
  This edge is crossed by all $x$--$(ij)$ edges, and in these items $|x$--$(ij)| \ge 1$ by definition.
  Note, by the same argument as Lemma~\ref{lemma:no-int-cross}, the edge is not crossed later in the derivation.

  Second, consider adding $e \in \{xi, xj\}$, to $H$, an item with $[ij]$ or $[ij.x]$, forming an item $G[ij.x]$.
  Note, $e$ does not cross any edges in $H$.
  Let $E(F[kl.y])$ be the set of $y$--$[kl]$ edges in some item $F$.
  Note that $e \in E(G)$.
  We will show how this set of edges is affected by the rules and what that implies for $e$.
  Consider each input item $A[kl.y]$ in each rule, with output item $C$.
  Every item $A$ falls into one of four categories:
  (1) $\forall f \in E(A), f$ is crossed by an edge in another of the rule's input items,
  (2) $E(A) \subseteq E(C)$,
  (3) $A \land kl \mapsto C$ and there are no $ky$ or $ly$ edges in $A$,
  (4) $A$ contains edge $kl$ and there are no $ky$ or $ly$ edges in $A$.

  Cases 2-4 are straightforward to identify.
  For an example of the first case, consider the rightmost item in rule 4.
  The relevant edges are $k$--$(lj]$ (by construction, $kl$ is not present).
  Since the leftmost item is either an R or N, $|l$--$(ik)| \ge 1$.
  Since $i < k < l < j$, all $k$--$(lj]$ edges will cross all $l$--$[ik)$ edges.
  Therefore applying this rule will cross all $k$--$(lj]$ edges in the rightmost item.

  Initially, $e$ is not crossed and $e \in E(G)$.
  For each rule application, edges in $E(A)$ are either crossed (1 and 3), remain in the set $E(C)$ (2), or must already be crossed (4).
  Since the final item is an interval and $E($Interval$) = \emptyset$, there must be a subsequent rule that is not in case 2.
  Therefore $e$ will be crossed.
\end{proof}

\subsection{Comparison with \textcite{ec}} \label{sec:ec-comparison}

Our algorithm is based on \textcite{ec}, which had the crucial idea of one-endpoint crossing and a complete decomposition of the tree case.
Our changes and extensions provide several benefits:

\tightparagraph{Extension to graphs}
By extending to support multiple parents while preventing cycles, we substantially expand the space of generatable structures.

\tightparagraph{Uniqueness}
By avoiding derivational ambiguity we reduce the search space and enable efficient summing as well as maxing.
Most of the cases in which ambiguity arises in \textcite{ec}'s algorithm are due to symmetry that is not explicitly broken.
For example, the rule we worked through in the previous section defined $t \in (sq)$ when $|C| = 1$.
Picking $t \in (ps)$ would also lead to a valid set of rules, but allowing either creates a spurious ambiguity.
This ambiguity is resolved by tracking whether there is only one edge to the external vertex or more than one, and requiring more than one in rules 6 and 7.
Other changes include ensuring equivalent structures cannot be represented by multiple item types and enforcing a unique split point in $B$ items.

\tightparagraph{More concise algorithm definition}
By separating edge creation from item merging, and defining our rules via a combination of templates and code, we are able to define our algorithm more concisely.

\subsection{Algorithm Extensions}

\subsubsection{Edge Labels and Word Labels}\label{sec:labels}
Edge labels can be added by calculating either the sum or max over edge types when adding each edge.
Word labels (\myeg POS Tags) must be added to the state, specifying a label for each visible word ($p$, $q$ and $o$).
This state expansion is necessary to ensure agreement when combining items.

\subsubsection{Ensuring a Structural Tree is Present}
Our algorithm constrains the space of graph structures, but we also want to ensure that our parse contains a projective tree of non-trace edges.

To ensure every word gets one and only one structural parent, we add booleans to the state, indicating whether $p$, $q$ and $o$ have structural parents.
When adding edges, a structural edge cannot be added if a word already has a structural parent.
When combining items, no word can receive more than one structural parent, and words that will end up in the middle of the span must have exactly one.
Together, these constraints ensure we have a tree.

To ensure the tree is projective, we need to prevent structural edges from crossing.
Crossing edges are introduced in two ways, and in both we can avoid structural edges crossing by tracking whether there are structural $o$--$[pq]$ edges.
Such edges are present if a rule adds a structural $op$ or $oq$ edge, or if a rule combines an item with structural $o$--$[pq]$ edges and $o$ will still be external in the item formed by the rule.

For adding edges, every time we add a $pq$ edge in the $N$, $L$, $R$ and $B$ items we create a crossing with all $o$--$(pq)$ edges.
We do not create a crossing with $oq$ or $op$, but our ordering of edge creation means these are not present when we add a $pq$ edge, so tracking structural $o$--$[pq]$ edges gives us the information we need to prevent two structural edges crossing.

For combining items, in Lemma~\ref{lemma:ext-crossing} we showed that during combinations, $o$--$[pq]$ edges in each pair of items will cross.
As a result, knowing whether any $o$--$[pq]$ edge is structural is sufficient to determine whether two structural edges will cross.

\subsection{Complexity}

Consider a sentence with $n$ tokens, and let $E$ and $S$ be the number of edge types and word labels in our grammar respectively.

\tightparagraph{Parses without word or edge labels}
Rules have up to four positions, leading to complexity of $O(n^4)$.
Note, there is also an important constant--once our templates are expanded, there are 49,292 rules.

\tightparagraph{With edge labels}
When using a first-order model, edge labels only impact the rules for edge creation, leading to a complexity of $O(n^4 + E n^2)$.

\tightparagraph{With word labels}
Since we need to track word labels in the state, we need to adjust every $n$ by a factor of $S$, leading to $O(S^4 n^4 + E S^2 n^2)$.

%
%

\section{Parse Representation} \label{sec:representation}

Our algorithm relies on the assumption that we can process the dependents to the left and right of a word independently and then combine the two halves.
This means we need lexicalized structures, which the \ptb does not provide.
We define a new representation in which each non-terminal symbol is associated with a specific word (the head).
Unlike dependency parsing, we retain all the information required to reconstruct the constituency parse.

Our approach is related to \textcite{cck} and \textcite{hayashi-nagata:2016}, with three key differences:
(1) we encode non-terminals explicitly, rather than implicitly through adjunction operations, which can cause ambiguity,
(2) we add representations of null elements and co-indexation,
(3) we modify head rules to avoid problematic structures.

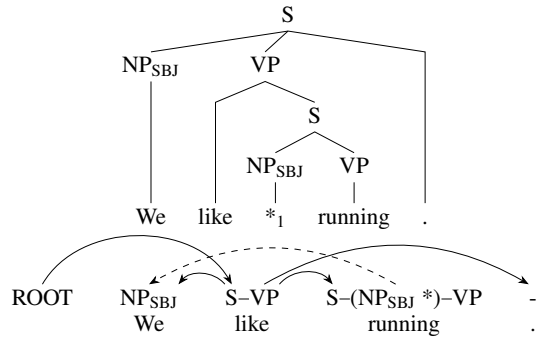
\begin{figure}
\centering
\scalebox{0.8}{
\begin{tikzpicture}
[ every node/.style={
    node distance=2ex,
    inner sep=0pt
  },
  structural/.style={
    ->,
    >=Stealth,
    thin,
  },
  trace/.style={
    ->,
    >=Stealth,
    thin,
    dashed
  },
]
  \node (w0) at (0, 0) {\strut We};
  \node (w1) [right=3ex of w0.east] {\strut like};
  \node (w2) [right=3ex of w1.east] {\strut *\textsubscript{1}};
  \node (w3) [right=3ex of w2.east] {\strut running};
  \node (w4) [right=3ex of w3.east] {\strut .};

  \node (nt00) [above=of w0.north] {\strut};
  \node (nt10) [above=of w1.north] {\strut};
  \node (nt20) [above=of w2.north] {\strut NP\textsubscript{SBJ}};
  \node (nt30) [above=of w3.north] {\strut VP};
  \node (nt40) [above=of w4.north] {\strut};

  \node (nt01) [above=of nt00.north] {\strut};
  \node (nt11) [above=of nt10.north] {\strut};
  \path (nt20.north) -- node[above=2ex] (nt21-31) {\strut S} (nt30.north);
  \node (nt41) [above=of nt40.north] {\strut};

  \node (nt02) [above=of nt01.north] {\strut NP\textsubscript{SBJ}};
  \path (nt11.north) -- node[above=2ex] (nt12-32) {\strut VP} (nt21-31.north);
  \node (nt42) [above=of nt41.north] {\strut};

  \path (nt02.north) -- node[above=2ex] (nt03-43) {\strut S} (nt42.north);

  \draw (w0.north) -- (nt02.south);
  \draw (w1.north) -- (nt11.north) -- (nt12-32.south) -- (nt21-31.north);
  \draw (w2.north) -- (nt20.south);
  \draw (nt20.north) -- (nt21-31.south) -- (nt30.north);
  \draw (w3.north) -- (nt30.south);
  \draw (w4.north) -- (nt42.north) -- (nt03-43.south) -- (nt02.north);
  \draw (nt12-32.north) -- (nt03-43.south);
\end{tikzpicture}
}

\vspace{-2mm}
\scalebox{0.8}{
\centering
\begin{tikzpicture}
[ every node/.style={
    node distance=2ex,
    inner sep=0pt
  },
  structural/.style={
    ->,
    >=Stealth,
    thin,
  },
  trace/.style={
    ->,
    >=Stealth,
    thin,
    dashed
  },
]

  \node (sntR0) at (0, 1.8) {\strut ROOT};

  \node (snt00) [right=2em of sntR0.east] {\strut NP\textsubscript{SBJ}};
  \node (sw0) [below=-0.3ex of snt00] {\strut We};

  \node (snt10) [right=2em of snt00.east] {\strut S};
  \node (snt11) [right=1ex of snt10.east] {\strut VP};
  \draw (snt10.east) -- (snt11.west);
  \node (sw1a) at ($(snt10.west)!0.5!(snt11.east)$) {\strut};
  \node (sw1) [below=-0.3ex of sw1a] {\strut like};

  \node (snt20) [right=2em of snt11.east] {\strut S};
  \node (snt21) [right=1ex of snt20.east] {\strut (NP\textsubscript{SBJ} *)};
  \draw (snt20.east) -- (snt21.west);
  \node (snt22) [right=1ex of snt21.east] {\strut VP};
  \draw (snt21.east) -- (snt22.west);
  \node (sw2a) at ($(snt20.west)!0.5!(snt22.east)$) {\strut};
  \node (sw2) [below=-0.3ex of sw2a] {\strut running};

  \node (snt40) [right=2em of snt22.east] {\strut -};
  \node (sw4) [below=-0.3ex of snt40] {\strut .};

  \draw [structural,out=60,in=120] (sntR0.north) to (snt10.north);
  \draw [structural,out=130,in=50] (snt10.north west) to (snt00.north east);
  \draw [structural,out=50,in=130] (snt11.north east) to (snt20.north);
  \draw [structural,out=30,in=150] (snt11.north) to (snt40.north west);
  \draw [trace,out=150,in=30] (snt21.north) to (snt00.north);
\end{tikzpicture}}%
\vspace{-4mm}
  \caption{\label{fig:repr2}
    Parse representations for graph structures, \ptb (top) and ours (bottom).
  }
\end{figure}

Figure~\ref{fig:repr2} shows a comparison of the \ptb representation and ours.
We add lexicalization, assigning each non-terminal to a word.
The only other changes are visual notation, with non-terminals moved to be directly above the words to more clearly show the distinction between \emph{spines} and \emph{edges}.

\tightparagraph{Spines}
Each word is assigned a spine, shown immediately above the word.
A spine is the ordered set of non-terminals that the word is the head of, \myeg S-VP for \emph{like}.
If a symbol occurs more than once in a spine, we use indices to distinguish instances.

\tightparagraph{Edges}
An edge is a link between two words, with a label indicating the symbols it links in the child and parent spines.
In our figures, edge labels are indicated by where edges start and end.

\tightparagraph{Null Elements}
We include each null element in the spine of its parent, unlike \textcite{hayashi-nagata:2016}, who effectively treated null elements as words, assigning them independent spines.
We also considered encoding null elements entirely on edges but found this led to poorer performance.

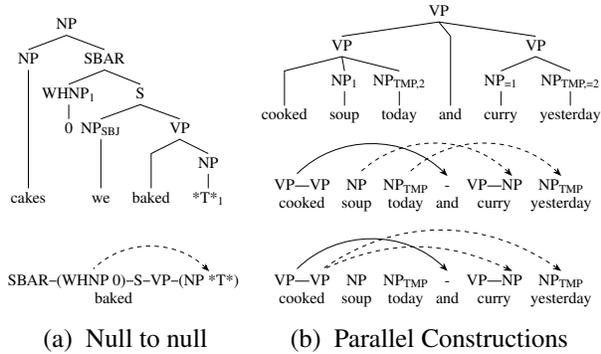
\begin{figure}
\begin{subfigure}[b]{0.21\textwidth}
  \centering
  \scalebox{0.55}{\begin{tikzpicture}
[ every node/.style={
    node distance=2ex,
    inner sep=0pt
  },
  structural/.style={
    <-,
    shorten >=2pt,
    >=Stealth,
    thin
  },
  trace/.style={
    <-,
    shorten >=2pt,
    >=Stealth,
    thin,
    dashed
  },
]

  \node (w0) at (0, 0) {\strut cakes};
  \node (w1) [right=3ex of w0.east] {\strut};
  \node (w2) [right=3ex of w1.east] {\strut we};
  \node (w3) [right=3ex of w2.east] {\strut baked};
  \node (w4) [right=3ex of w3.east] {\strut *T*\textsubscript{1}};

  \node (nt00) [above=of w0.north] {\strut};
  \node (nt10) [above=of w1.north] {\strut};
  \node (nt20) [above=of w2.north] {\strut};
  \node (nt30) [above=of w3.north] {\strut};
  \node (nt40) [above=of w4.north] {\strut NP};
  \draw (w4.north) -- (nt40.south);

  \node (nt01) [above=of nt00.north] {\strut};
  \node (nt11) [above=of nt10.north] {\strut 0};
  \node (nt21) [above=of nt20.north] {\strut NP\textsubscript{SBJ}};
  \draw (w2.north) -- (nt21.south);
  \path (nt30.north) -- node[above=2ex] (nt31-41) {\strut VP} (nt40.north);
  \draw (w3.north) -- (nt30.north) -- (nt31-41.south) -- (nt40.north);

  \node (nt02) [above=of nt01.north] {\strut};
  \node (nt12) [above=of nt11.north] {\strut WHNP\textsubscript{1}};
  \draw (nt11.north) -- (nt12.south);
  \path (nt21.north) -- node[above=2ex] (nt22-42) {\strut S} (nt31-41.north);
  \draw (nt21.north) -- (nt22-42.south) -- (nt31-41.north);

  \node (nt03) [above=of nt02.north] {\strut NP};
  \draw (w0.north) -- (nt03.south);
  \path (nt12.north) -- node[above=2ex] (nt13-43) {\strut SBAR} (nt22-42.north);
  \draw (nt12.north) -- (nt13-43.south) -- (nt22-42.north);

  \path (nt03.north) -- node[above=2ex] (nt04-44) {\strut NP} (nt13-43.north);
  \draw (nt03.north) -- (nt04-44.south) -- (nt13-43.north);

  \node (snt20) at (0.0, -2) {\strut SBAR};
  \node (snt21) [right=1ex of snt20.east] {\strut (WHNP 0)};
  \node (snt22) [right=1ex of snt21.east] {\strut S};
  \node (snt23) [right=1ex of snt22.east] {\strut VP};
  \node (snt24) [right=1ex of snt23.east] {\strut (NP *T*)};
  \node (sw2a) at ($(snt20.east)!0.5!(snt24.west)$) {\strut};
  \node (sw2) [below=-0.3ex of sw2a] {\strut baked};
  \draw (snt20.east) -- (snt21.west);
  \draw (snt21.east) -- (snt22.west);
  \draw (snt22.east) -- (snt23.west);
  \draw (snt23.east) -- (snt24.west);

  \draw [trace,out=135,in=45] (snt24.north) to (snt21.north);
\end{tikzpicture}}
  \caption{\label{fig:null-null}
    Null to null
  }
\end{subfigure}
\hfill
\begin{subfigure}[b]{0.27\textwidth}
  \centering
  \scalebox{0.55}{\begin{tikzpicture}
[ every node/.style={
    node distance=2ex,
    inner sep=0pt
  },
  structural/.style={
    <-,
    shorten >=2pt,
    >=Stealth,
    thin
  },
  trace/.style={
    <-,
    shorten >=2pt,
    >=Stealth,
    thin,
    dashed
  },
]

  \node (w0) at (0, 0) {\strut cooked};
  \node (w1) [right=3ex of w0.east] {\strut soup};
  \node (w2) [right=3ex of w1.east] {\strut today};
  \node (w3) [right=3ex of w2.east] {\strut and};
  \node (w4) [right=3ex of w3.east] {\strut curry};
  \node (w5) [right=3ex of w4.east] {\strut yesterday};

  \node (nt00) [above=of w0.north] {\strut};
  \node (nt10) [above=of w1.north] {\strut NP\textsubscript{1}};
  \node (nt20) [above=of w2.north] {\strut NP\textsubscript{TMP,2}};
  \node (nt30) [above=of w3.north] {\strut};
  \node (nt40) [above=of w4.north] {\strut NP\textsubscript{=1}};
  \node (nt50) [above=of w5.north] {\strut NP\textsubscript{TMP,=2}};

  \path (nt00.north) -- node[above=2ex] (nt01-21) {\strut VP} (nt20.north);
  \node (nt31) [above=of nt30.north] {\strut};
  \path (nt40.north) -- node[above=2ex] (nt41-51) {\strut VP} (nt50.north);

  \path (nt01-21.north) -- node[above=2ex] (nt02-52) {\strut VP} (nt41-51.north);

  \draw (w0.north) -- (nt00.north) -- (nt01-21.south) -- (nt20.north);
  \draw (w1.north) -- (nt10.south);
  \draw (nt10.north) -- (nt01-21.south);
  \draw (w2.north) -- (nt20.south);
  \draw (w3.north) -- (nt31.north) -- (nt02-52.south);
  \draw (w4.north) -- (nt40.south);
  \draw (w5.north) -- (nt50.south);
  \draw (nt01-21.north) -- (nt02-52.south) -- (nt41-51.north);
  \draw (nt40.north) -- (nt41-51.south) -- (nt50.north);

  \node (snt00) at (0.0, -1.7) {\strut VP};
  \node (snt01) [right=2ex of snt00.east] {\strut VP};
  \draw (snt00.east) -- (snt01.west);
  \node (snt10) [right=1em of snt01.east] {\strut NP};
  \node (snt20) [right=1em of snt10.east] {\strut NP\textsubscript{TMP}};
  \node (snt30) [right=1em of snt20.east] {\strut -};
  \node (snt40) [right=1em of snt30.east] {\strut VP};
  \node (snt41) [right=2ex of snt40.east] {\strut NP};
  \draw (snt40.east) -- (snt41.west);
  \node (snt50) [right=1em of snt41.east] {\strut NP\textsubscript{TMP}};
  \node (sw0a) at ($(snt00.west)!0.5!(snt01.east)$) {\strut};
  \node (sw0) [below=-0.3ex of sw0a.south] {\strut cooked};
  \node (sw1) [below=-0.3ex of snt10.south] {\strut soup};
  \node (sw2) [below=-0.3ex of snt20.south] {\strut today};
  \node (sw3) [below=-0.3ex of snt30.south] {\strut and};
  \node (sw4a) at ($(snt40.west)!0.5!(snt41.east)$) {\strut};
  \node (sw4) [below=-0.3ex of sw4a.south] {\strut curry};
  \node (sw5) [below=-0.3ex of snt50.south] {\strut yesterday};

  \draw [name path=s2,structural,semithick] (snt30.north) to [out=135,in=45] (snt00.north east);
  \draw [name path=t0,trace,semithick] (snt41.north) to [out=135,in=45] (snt10.north);
  \draw [name path=t1,trace,semithick] (snt50.north) to [out=135,in=45] (snt20.north);

  \node (snt200) at (0.0, -4) {\strut VP};
  \node (snt201) [right=2ex of snt200.east] {\strut VP};
  \draw (snt200.east) -- (snt201.west);
  \node (snt210) [right=1em of snt201.east] {\strut NP};
  \node (snt220) [right=1em of snt210.east] {\strut NP\textsubscript{TMP}};
  \node (snt230) [right=1em of snt220.east] {\strut -};
  \node (snt240) [right=1em of snt230.east] {\strut VP};
  \node (snt241) [right=2ex of snt240.east] {\strut NP};
  \draw (snt240.east) -- (snt241.west);
  \node (snt250) [right=1em of snt241.east] {\strut NP\textsubscript{TMP}};
  \node (sw20a) at ($(snt200.west)!0.5!(snt201.east)$) {\strut};
  \node (sw20) [below=-0.3ex of sw20a.south] {\strut cooked};
  \node (sw21) [below=-0.3ex of snt210.south] {\strut soup};
  \node (sw22) [below=-0.3ex of snt220.south] {\strut today};
  \node (sw23) [below=-0.3ex of snt230.south] {\strut and};
  \node (sw24a) at ($(snt240.west)!0.5!(snt241.east)$) {\strut};
  \node (sw24) [below=-0.3ex of sw24a.south] {\strut curry};
  \node (sw25) [below=-0.3ex of snt250.south] {\strut yesterday};

  \draw [name path=s2,structural,semithick] (snt230.north) to [out=135,in=45] (snt200.north east);
  \draw [name path=t0,trace,semithick] (snt241.north) to [out=155,in=25] (snt201.north);
  \draw [name path=t1,trace,semithick] (snt250.north) to [out=145,in=35] (snt201.north);
\end{tikzpicture}
  \caption{\label{fig:gapping}
    Parallel Constructions
  }
\end{subfigure}
\vspace{-8mm}
\caption{
  Examples of syntactic phenomena. 
  Only relevant edges and spines are shown.
}
\end{figure}

\tightparagraph{Co-indexation}
The treebank represents movement with index pairs on null elements and non-terminals, \myeg *\textsubscript{1} and NP\textsubscript{1} in Figure~\ref{fig:repr2}.
We represent co-indexation with edges, one per reference, going from the null element to the non-terminal.
There are three special cases of co-indexation:

\noindent
\textbf{(1)}
It is possible for trace edges to have the same start and end points as a non-trace edge.
We restrict this case to allow at most one trace edge.
This decreases edge coverage in the training set by 0.006\%.

\noindent
\textbf{(2)}
In some cases the reference non-terminal only spans a null element, \myeg the WHNP in Figure~\ref{fig:null-null}.
For these we use a reversed edge to avoid creating a cycle.
Figure~\ref{fig:null-null} shows a situation where the trace edge links two positions in the same spine, which we assign with the spine during parsing.

\noindent
\textbf{(3)}
For parallel constructions the treebank co-indexes arguments that fulfill the same roles (Fig.~\ref{fig:gapping}).
These are distinct from the previous cases because neither index is on a null element.
We considered two options: add edges from the repetition to the referent (middle), or add edges from the repetition to the parent of the first occurrence (bottom).
Option two produces fewer non-\oneEC structures and explicitly represents all predicates, but only implicitly captures the original structure.

\subsection{Avoiding Adjunction Ambiguity}

Prior work on parsing with spines has used r-adjunction to add additional non-terminals to spines.
This introduces ambiguity, because edges modifying the same spine from different sides may not have a unique order of application.
We resolve this issue by using more articulated spines with the complete set of non-terminals.
We found that $0.045\%$ of spine instances in the development set are not observed in training, though in $70\%$ of those cases an equivalent spine sans null elements is observed in training.

\subsection{Head Rules} \label{sec:rep-head}

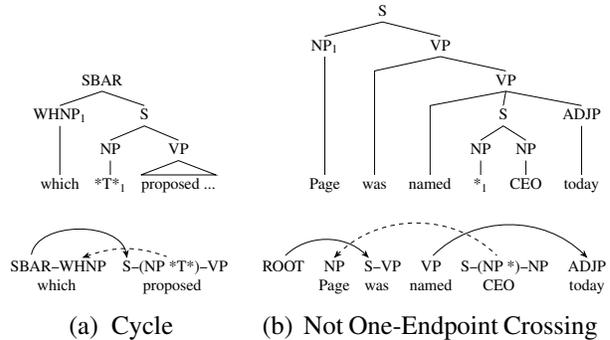
\begin{figure}
\begin{subfigure}[b]{0.19\textwidth}
  \centering
  \scalebox{0.55}{\begin{tikzpicture}
[ every node/.style={
    node distance=2ex,
    inner sep=0pt
  },
  structural/.style={
    <-,
    shorten >=2pt,
    >=Stealth,
    thin
  },
  trace/.style={
    <-,
    shorten >=2pt,
    >=Stealth,
    thin,
    dashed
  },
]

  \node (w0) at (0.7, 0) {\strut which};
  \node (w1) [right=2ex of w0.east] {\strut *T*\textsubscript{1}};
  \node (w2) [right=2ex of w1.east] {\strut proposed ...};

  \node (nt01) [above=of w0.north] {\strut};
  \node (nt11) [above=of w1.north] {\strut NP};
  \draw (w1.north) -- (nt11.south);
  \node (nt21) [above=of w2.north] {\strut VP};
  \draw (w2.north west) -- (nt21.south) -- (w2.north east) -- (w2.north west);

  \node (nt02) [above=of nt01.north] {\strut WHNP\textsubscript{1}};
  \draw (w0.north) -- (nt02.south);
  \path (nt11.north) -- node[above=2ex] (nt12-22) {\strut S} (nt21.north);
  \draw (nt11.north) -- (nt12-22.south) -- (nt21.north);

  \path (nt02.north) -- node[above=2ex] (nt03-23) {\strut SBAR} (nt12-22.north);
  \draw (nt02.north) -- (nt03-23.south) -- (nt12-22.north);

  \node (snt00) at (0, -2) {\strut SBAR};
  \node (snt01) [right=1ex of snt00.east] {\strut WHNP};
  \draw (snt00.east) -- (snt01.west);
  \node (sw0a) at ($(snt00)!0.5!(snt01)$) {\strut};
  \node (sw0) [below=-0.3ex of sw0a.south] {\strut which};
  \node (snt10) [right=1em of snt01.east] {\strut S};
  \node (snt11) [right=1ex of snt10.east] {\strut (NP *T*)};
  \draw (snt10.east) -- (snt11.west);
  \node (snt12) [right=1ex of snt11.east] {\strut VP};
  \draw (snt11.east) -- (snt12.west);
  \node (sw1a) at ($(snt10)!0.5!(snt12)$) {\strut};
  \node (sw1) [below=-0.3ex of sw1a.south] {\strut proposed};

  \draw [trace,out=20,in=160] (snt01.north) to (snt11.north);
  \draw [structural,out=100,in=80] (snt10.north) to (snt00.north);
\end{tikzpicture}}
  \caption{\label{fig:cycle}
    Cycle
  }
\end{subfigure}
\hfill
\begin{subfigure}[b]{0.27\textwidth}
  \centering
  \scalebox{0.55}{\begin{tikzpicture}
[ every node/.style={
    node distance=2ex,
    inner sep=0pt
  },
  structural/.style={
    <-,
    shorten >=2pt,
    >=Stealth,
    thin
  },
  trace/.style={
    <-,
    shorten >=2pt,
    >=Stealth,
    thin,
    dashed
  },
]

  \node (w0) at (1, 0) {\strut Page};
  \node (w1) [right=3ex of w0.east] {\strut was};
  \node (w2) [right=3ex of w1.east] {\strut named};
  \node (w3) [right=3ex of w2.east] {\strut *\textsubscript{1}};
  \node (w4) [right=3ex of w3.east] {\strut CEO};
  \node (w5) [right=3ex of w4.east] {\strut today};

  \node (nt01) [above=of w0.north] {\strut};
  \node (nt11) [above=of w1.north] {\strut};
  \node (nt21) [above=of w2.north] {\strut};
  \node (nt31) [above=of w3.north] {\strut NP};
  \node (nt41) [above=of w4.north] {\strut NP};
  \node (nt51) [above=of w5.north] {\strut};

  \node (nt02) [above=of nt01.north] {\strut};
  \node (nt12) [above=of nt11.north] {\strut};
  \node (nt22) [above=of nt21.north] {\strut};
  \path (nt31.north) -- node[above=2ex] (nt32-42) {\strut S} (nt41.north);
  \node (nt52) [above=of nt51.north] {\strut ADJP};

  \node (nt03) [above=of nt02.north] {\strut};
  \node (nt13) [above=of nt12.north] {\strut};
  \path (nt22.north) -- node[above=2ex] (nt23-53) {\strut VP} (nt52.north);

  \node (nt04) [above=of nt03.north] {\strut NP\textsubscript{1}};
  \path (nt13.north) -- node[above=2ex] (nt14-54) {\strut VP} (nt23-53.north);

  \path (nt04.north) -- node[above=2ex] (nt05-55) {\strut S} (nt14-54.north);

  \draw (w0.north) -- (nt04.south);
  \draw (nt04.north) -- (nt05-55.south) -- (nt14-54.north);
  \draw (w1.north) -- (nt13.north) -- (nt14-54.south) -- (nt23-53.north);
  \draw (w2.north) -- (nt22.north) -- (nt23-53.south) -- (nt52.north);
  \draw (nt32-42.north) -- (nt23-53.south);
  \draw (nt31.north) -- (nt32-42.south) -- (nt41.north);
  \draw (w3.north) -- (nt31.south);
  \draw (w4.north) -- (nt41.south);
  \draw (w5.north) -- (nt52.south);


  \node (sntR0) at (0, -2) {\strut ROOT};
  \node (snt00) [right=1.2em of sntR0.east] {\strut NP};
  \node (snt10) [right=1.2em of snt00.east] {\strut S};
  \node (snt11) [right=1ex of snt10.east] {\strut VP};
  \node (snt20) [right=1.2em of snt11.east] {\strut VP};
  \node (snt30) [right=1.2em of snt20.east] {\strut S};
  \node (snt31) [right=1ex of snt30.east] {\strut (NP *)};
  \node (snt32) [right=1ex of snt31.east] {\strut NP};
  \node (snt40) [right=1.2em of snt32.east] {\strut ADJP};
  \node (sw0) [below=-0.3ex of snt00.south] {\strut Page};
  \node (sw1a) at ($(snt10.east)!0.5!(snt11.west)$) {\strut};
  \draw (snt10.east) -- (snt11.west);
  \node (sw1) [below=-0.3ex of sw1a.south] {\strut was};
  \node (sw2) [below=-0.3ex of snt20.south] {\strut named};
  \node (sw3a) at ($(snt30.east)!0.5!(snt32.west)$) {\strut};
  \draw (snt30.east) -- (snt31.west);
  \draw (snt31.east) -- (snt32.west);
  \node (sw3) [below=-0.3ex of sw3a.south] {\strut CEO};
  \node (sw4) [below=-0.3ex of snt40.south] {\strut today};

  \draw [structural] (snt10.north) to [out=135,in=45] (sntR0.north);
  \draw [structural] (snt40.north) to [out=135,in=45] (snt20.north);
  \draw [trace] (snt00.north) to [out=45,in=135] (snt31.north);

\end{tikzpicture}}
  \caption{\label{fig:not-1ec}
    Not One-Endpoint Crossing
  }
\end{subfigure}
\vspace{-3mm}
\caption{Examples of problematic graph structured syntactic phenomena before our head rule changes.}
\end{figure}

To construct the spines, we lexicalize with head rules that consider the type of each non-terminal and its children.
Different heads often represent more syntactic or semantic aspects of the phrase.
For trees, all head rules generate valid structures.
For graphs, head rules influence the creation of two problematic structures:

\tightparagraph{Cycles}
These arise when the head chosen for a phrase is also an argument of another word in the phrase.
Figure~\ref{fig:cycle} shows a cycle between \emph{which} and \emph{proposed}.
We resolve this by changing the head of an SBAR to be an S rather than a Wh-noun phrase.

\tightparagraph{One-Endpoint Crossing Violations}
Figure~\ref{fig:not-1ec} shows an example, with the trace from \emph{CEO} to \emph{Page} crossing two edges with no endpoints in common.
We resolve this case by changing the head for VPs to be a child VP rather than an auxiliary.



\section{Results} \label{sec:impl}

\tightparagraph{Algorithm Coverage}
In Table~\ref{tab:coverage} we show the impact of design decisions for our representation.
The percentages indicate how many sentences in the training set are completely recoverable by our algorithm.
Each row shows the outcome of an addition to the previous row, starting from no traces at all, going to our representation with the head rules of \textcite{cck}, then changing the head rules, reversing null-null edges, and changing the target of edges in parallel constructions.
The largest gain comes from changing the head rules, which is unsurprising since \textcite{cck}'s rules were designed for trees (any set of rules form valid structures for trees).

\tightparagraph{Problematic Structures}
Of the sentences we do not cover, $54\%$ contain a cycle, $45\%$ contain a \oneEC violation, and $1\%$ contain both.
To understand these problematic sentences, we manually inspected a random sample of twenty parses that contained a cycle and twenty parses with a \oneEC violation (these forty are $6\%$ of all problematic parses, enough to identify the key remaining challenges).

For the cycles, eleven cases related to sentences containing variations of NP~\emph{said} interposed between two parts of a single quote.
A cycle was present because the top node of the parse was co-indexed with a null argument of \emph{said} while \emph{said} was an argument of the head word of the quote.
The remaining cases were all instances of pseudo-attachment, which the treebank uses to show that non-adjacent constituents are related \parencite{ptb-guide}.
These cases were split between use of Expletive (5) and Interpret Constituent Here (4) traces.

It was more difficult to determine trends for cases where the parse structure has a \oneEC violation.
The same three cases, Expletive, Interpret Constituent Here, and NP \emph{said} accounted for half of the issues.

\subsection{Implementation}
We implemented a parser with a first-order model using our algorithm and representation.
Code for the parser, for conversion to and from our representation, and for our metrics is available\footnote{
  \url{https://github.com/jkkummerfeld/1ec-graph-parser}
}.
Our parser uses a linear discriminative model, with features based on \textcite{McDonald-etal:2005:Proj}.
We train with an online primal subgradient approach \parencite{Ratliff:2007} as described by \textcite{Kummerfeld-etal:2015:EMNLP}, with parallel lock-free sparse updates.

\begin{table}
\small
\centering
  \vspace{2mm}
  \begin{tabular}{lrr}
    \hline
    & \multicolumn{2}{c}{Coverage (\%)} \\
    Representation & Sentences & Edges \\
    \hline
    \hline
    Projective trees, no nulls & 26.59 & 96.27 \\
    Projective trees, with nulls & 43.85 & 96.27 \\
    Projective graphs & 50.60 & 96.67 \\
    One-EC graphs & 71.84 & 98.31 \\
    + Head rule changes & 92.35 & 99.23 \\
    + Null reversal & 97.02 & 99.45 \\
    + Parallel construction shift & 97.31 & 99.49 \\
    \hline
  \end{tabular}
  \caption{ \label{tab:coverage}
    Training set coverage for different representations.
    One-EC graphs uses our representation, but with the head rules from \textcite{cck}.
    For the edge results, we only exclude edges necessary to make each parse representable (\myeg excluding only one edge in a cycle and counting the rest).
  }
\end{table}

\tightparagraph{Loss Function}
We use a weighted Hamming distance for loss-augmented decoding, as it can be efficiently decomposed within our dynamic program.
Calculating the loss for incorrect spines and extra edges is easy.
For missing edges, we add when a deduction rule joins two spans that cover an end of the edge, since if it does not exist in one of those items it is not going to be created in future.
To avoid double counting we subtract when combining two halves that contain the two ends of a gold edge\footnote{
One alternative is to count half of it on each end, removing the need for subtraction later.
Another is to add it during the combination step.
}.

\tightparagraph{Inside--Outside Calculations}
Assigning scores to edges is simple, as they are introduced in a single item in the derivation.
Spines must be introduced in multiple items (left, right, and external positions) and must be assigned a score in every case to avoid ties in beams.
We add the score every time the spine is introduced and then subtract when two items with a spine in common are combined.

\tightparagraph{Algorithm rule pruning}
Many \oneEC structures are not seen in our data.
We keep only the rules used in gold training parses, reducing the set of 49,292 from the general algorithm to 627 (including rules for both adding arcs and combining items).
Almost every template in Algorithm~\ref{alg:rules} generates some unnecessary rules, and no items of type $B$ are needed.
The remaining rules still have high coverage of the development set, missing only 15 rules, each applied once (out of 78,692 rule applications).
By pruning in this way, we are considering the intersection of \oneEC graphs and the true space of structures used in language.


\tightparagraph{Chart Pruning}
To improve speed we use beams and cube pruning \parencite{Chiang:2007}, discarding items based on their Viterbi inside score.
We divide each beam into sub-beams based on aspects of the state.
This ensures diversity and enables consideration of only compatible items during binary and ternary compositions.

\tightparagraph{Coarse to Fine Pruning}
Rather than parsing immediately with the full model we use several passes with progressively richer structure \parencite{Goodman:1997}:
(1) Projective parsing without traces or spines, and simultaneously a trace classifier,
(2) Non-projective parsing without spines, and simultaneously a spine classifier,
(3) Full structure parsing.
Each pass prunes using parse max-marginals and classifier scores, tuned on the development set.
The third pass also prunes spines that are not consistent with any unpruned edge from the second pass.
For the spine classifier we use a bidirectional LSTM tagger, implemented in DyNet \parencite{dynet}.

\tightparagraph{Speed}
Parsing took an average of $8.6$ seconds per sentence for graph parsing and $0.5$ seconds when the parser is restricted to trees\footnote{
  Using a single core of an Amazon EC2 m4.2xlarge instance (2.4 GHz Xeon CPU and 32 Gb of RAM).
}.
Our algorithm is also amenable to methods such as semi-supervised and adaptive supertagging, which can improve the speed of a parser after training \parencite{Lewis-Steedman:2014,Kummerfeld-Roesner-Dawborn-Haggerty-Curran-Clark:2010:ACL}.

\tightparagraph{Tree Accuracy}
On the standard tree-metric, we score $88.1$.
Using the same non-gold POS tags as input, \textcite{cck} score $90.9$, probably due to their second-order features and head rules tuned for performance\footnote{
  Previous work has shown that the choice of head can significantly impact accuracy \parencite{schwartz-abend-rappoport:2012:PAPERS}.
}.
Shifting to use their head rules, we score $88.9$.
Second-order features could be added to our model through the use of forest reranking, an improvement that would be orthogonal to this paper's contributions.

We can also evaluate on spines and edges.
Since their system produces regular \ptb trees, we convert its output to our representation and compare its results with our system using their head rules.
We see slightly lower accuracy for our system on both spines (94.0 \myvs 94.3) and edges (90.4 \myvs 91.1).

\begin{table}
\small
\centering
  \vspace{2mm}
  \begin{tabular}{lrrr}
    \hline
    System & P & R & F \\
    \hline
    \hline
    \multicolumn{4}{c}{Null Elements Only} \\
    \textcite{Johnson:2002} & 85 & 74 & 79 \\
    \textcite{hayashi-nagata:2016} & 90.3 & 81.7 & 85.8 \\
    \textcite{kato-matsubara:2016} & 88.5 & 82.1 & 85.2 \\
    This work & 89.5 & 81.6 & 85.4 \\
    \hline
    \multicolumn{4}{c}{Null Elements and Co-indexation} \\
    \textcite{Johnson:2002} & 73 & 63 & 68 \\
    \textcite{kato-matsubara:2016} & 81.2 & 74.7 & 77.8 \\
    This work & 74.3 & 67.3 & 70.6 \\
    \hline
  \end{tabular}
  \caption{ \label{tab:accuracy}
    Accuracy on section 23 using Johnson's metric.
  }
\end{table}

\newcommand{\kmcite}{K\&M \parencite*{kato-matsubara:2016}\xspace}

\tightparagraph{Trace Accuracy}
Table~\ref{tab:accuracy} shows results using \textcite{Johnson:2002}'s trace metric.
Our parser is competitive with previous work that has highly-engineered models: Johnson's system has complex non-local features on tree fragments, and similarly \textcite[K\&M][]{kato-matsubara:2016} consider complete items in the stack of their transition-based parser.
On co-indexation our results fall between Johnson and K\&M.
Converting to our representation, our parser has higher precision than K\&M on trace edges ($84.1$ \myvs $78.1$) but lower recall ($59.5$ \myvs $71.3$).
One modeling challenge we observed is class imbalance: of the many places a trace could be added, only a small number are correct, and so our model tends to be conservative (as shown by the P/R tradeoff).

\section{Conclusion}

We propose a representation and algorithm that cover $97.3\%$ of graph structures in the \ptb.
Our algorithm is $O(n^4)$, uniquely decomposes parses, and enforces the property that parses are composed of a core tree with additional traces and null elements.
A proof of concept parser shows that our algorithm can be used to parse and recover traces.

\section*{Acknowledgments}

Thank you to Greg Durrett for advice on parser implementation and debugging, and to the anonymous reviewers for their helpful feedback.
This research was partially supported by a General Sir John Monash Fellowship and the Office of Naval Research under MURI Grant No.\@\xspace N000140911081.

\citetrackertrue\pagetrackertrue\backtrackertrue

\newrefcontext[sorting=nyt]
\printbibliography

\end{document}